\def\eqref#1{equation~\ref{#1}}
\def\1{\bm{1}}
\newtheorem{lemma}{Lemma}
\newtheorem{theorem}{Theorem}
\newtheorem{proposition}{Proposition}
\newtheorem{remark}{Remark}
\newtheorem{assumption}{Assumption}
\def\mE{{\bm{E}}}
\def\mF{{\bm{F}}}
\def\mG{{\bm{G}}}
\def\mL{{\bm{L}}}
\def\mR{{\bm{R}}}
\def\mX{{\bm{X}}}
\DeclareMathAlphabet{\mathsfit}{\encodingdefault}{\sfdefault}{m}{sl}
\SetMathAlphabet{\mathsfit}{bold}{\encodingdefault}{\sfdefault}{bx}{n}
\newcommand{\E}{\mathbb{E}}
\newcommand{\R}{\mathbb{R}}
\DeclareMathOperator*{\argmax}{arg\,max}
\DeclareMathOperator*{\argmin}{arg\,min}
\def\wh{\widehat}
\def\wt{\widetilde}
\def\pr{\hbox{pr}}
\def\trans{^{\rm T}}
\def\n{\nonumber}
\def\f{{\boldsymbol{f}}}
\def\I{{\bf I}}
\def\R{{\bf R}}
\def\m{{\bf m}}
\def\X{{\bf X}}
\def\x{{\bf x}}
\def\d{{\bf d}}
\def\Z{{\bf Z}}
\def\p{{\bf p}}
\def\mL{{\mathcal{L}}}
\def\bL{{\mathfrak{L}}}
\def\E{{\mathrm{E}}}
\def\mR{\mathcal{R}}
\def\mX{\mathcal{X}}
\def\mE{\mathcal{E}}
\def\mF{\mathcal{F}}
\def\mG{\mathcal{G}}
\def\bmu{\boldsymbol\mu}
\def\bb{{\boldsymbol\beta}}
\def\bmu{\boldsymbol\mu}
\def\0{{\bf 0}}
\def\trans{^{\rm T}}
\def\pr{\hbox{pr}}
\def\wh{\widehat}
\def\wt{\widetilde}
\def\argmin{\mbox{argmin}}
\def\argmax{\mbox{argmax}}
\def\log{{\rm log}}
\def\bse{\begin{eqnarray*}}
\def\ese{\end{eqnarray*}}
\def\be{\begin{eqnarray}}
\def\ee{\end{eqnarray}}
\def\bsq{\begin{equation*}}
\def\esq{\end{equation*}}
\def\bq{\begin{equation}}
\def\eq{\end{equation}}
\def\sumIP1{\sum_{i=1, i\in P_1}^N}
\title{Unsupervised Domain Adaptation for Binary Classification with an Unobservable Source Subpopulation}
\author{\name Chao Ying \email chao.ying@wisc.edu \\
	\addr Departments of Statistics and of Biostatistics \& Medical Informatics \\
	University of Wisconsin-Madison
	\AND
	\name Jun Jin \email jjin2@hfhs.org \\
	\addr Public Health Sciences\\
	Henry Ford Health
	\AND
	\name Haotian Zhang \email haotiangeek@gmail.com \\
	\addr School of Computing\\
		University of Connecticut
	\AND
	\name Qinglong Tian \email qinglong.tian@uwaterloo.ca \\
	\addr Department of Statistics and Actuarial Science\\
	University of Waterloo
		\AND
	\name Yanyuan Ma \email yzm63@psu.edu \\
	\addr Department of Statistics\\
	Pennsylvania State University
		\AND
	\name Sharon Li \email sharonli@cs.wisc.edu \\
	\addr Department of Computer Sciences\\
	University of Wisconsin-Madison
		\AND
	\name Jiwei Zhao \email jiwei.zhao@wisc.edu \\
	\addr Departments of Statistics and of Biostatistics \& Medical Informatics\\
	University of Wisconsin-Madison
	}
\begin{document}

\maketitle

\begin{abstract}
We study an unsupervised domain adaptation problem where the source domain consists of subpopulations defined by the binary label $Y$ and a binary background (or environment) $A$. 
We focus on a challenging setting in which one such subpopulation in the source domain is unobservable. Naively ignoring this unobserved group can result in biased estimates and degraded predictive performance. 
Despite this structured missingness, we show that the prediction in the target domain can still be recovered.
Specifically, we rigorously derive both background-specific and overall predictive probabilities for the target domain. 
For practical implementation, we propose the distribution matching method to estimate the subpopulation proportions. 
We provide theoretical guarantees for the asymptotic behavior of our estimator, and establish an upper bound on the prediction error. 
Experiments on both synthetic and real-world datasets show that our method outperforms the naive benchmarks that do not account for this unobservable source subpopulation properly.
\end{abstract}
{\bf Key Words:} Unsupervised domain adaptation, Structured missingness, Distribution matching

\section{Introduction}\label{sec:intro}
	
	Unsupervised domain adaptation (UDA) \citep{kouw2019review} addresses the challenge of transferring predictive models from a labeled source domain to an unlabeled target domain under distributional shifts 	\citep{koh2021wilds,sagawaextending}.
	In this area, research methods aim to reduce domain discrepancy by aligning feature distributions, using statistical measures such as maximum mean discrepancy (MMD) \citep{tzeng2014deep} and higher-order moment matching (HoMM) \citep{chen2020homm}.
	Deep adaptation frameworks, such as deep adaptation network (DAN) \citep{long2015learning} and domain-adversarial neural network (DANN) \citep{ganin2016domain}, are also popularly used due to their strong empirical performance.
	There are also other approaches that integrate reconstruction objectives to disentangle domain-invariant and domain-specific components \citep{ghifary2016deep}. These approaches often assume access to a representative and diverse set of source examples. However, real-world datasets may violate this assumption in systematic and non-random ways.

In many applications, the goal is to build a target-domain prediction model to predict a binary label $Y$ that identifies a specific object in an image (e.g., classifying a waterbird versus a landbird in the Waterbirds dataset). 
Additionally, the image typically contains contextual attributes represented by a binary background or environment variable $A$, such as whether the background consists of water or land.
In this work, we focus on a more challenging and practically relevant UDA setting where a \emph{structured subpopulation is entirely missing from the source domain}. 
Specifically, we study the case where one subpopulation, defined by a particular combination of $Y$ and $A$, is unobserved in the source. 
This structured missingness is not merely a sampling artifact, but often reflects real-world constraints in data collection. For instance, in the widely studied Waterbirds dataset~\citep{sagawa2019distributionally}, waterbirds ($Y = 1$) photographed in water environments ($A = 1$) can be rare or entirely absent due to the difficulty of capturing such images in the wild. This issue arises in many other disciplines as well. In healthcare, certain patient subgroups, defined jointly by disease status and demographics, may be underrepresented or absent in historical datasets due to restrictive inclusion criteria or changes in clinical practice over time. When such models are applied to broader populations, unobserved subgroups can suffer from systematic mispredictions. 
This structured missingness \citep{mitra2023learning} fundamentally changes some statistical properties when comparing the source and target domains, and, if unaddressed, can lead to severely biased estimation and unreliable prediction in the target domain. These structured gaps pose new challenges that are not adequately addressed by conventional UDA techniques, which motivates our work.

To tackle this challenge, we develop a theoretical framework that accounts for the structured absence of a subpopulation, such as $(Y = 1, A = 1)$, in the source domain. Our key idea is to model how prediction in the target domain can still be recovered by relating it to the observable parts of the source and target data. Under a mild assumption that the distribution of features $X$ given $(Y, A)$ stays the same across domains, we derive closed-form expressions for making accurate predictions in the target domain. These expressions depend on the proportions of different subgroups in the target, which are unknown. To estimate them, we propose a practical method based on distribution matching that avoids modeling complex feature distributions directly. Specifically, we frame the problem as estimating finite-dimensional mixture proportions under structured conditional invariance, and propose a KL-divergence-based objective that can be optimized using only observable quantities.  
We also provide theoretical guarantees, showing that our approach yields statistically consistent estimates and deriving upper bounds on the prediction error of the resulting target-domain classifiers.
Overall, our framework provides the first rigorous characterization of model adaptation under structured subpopulation absence, and
enables robust domain adaptation in such a challenging scenario. 

We validate our approach through experiments on both synthetic and real-world datasets. We simulate domain adaptation scenarios where one subpopulation is systematically excluded from the source data and evaluate our method against baseline approaches that do not account for this missing group properly. Across a range of settings, our method consistently achieves higher accuracy and F1 scores. 
These results highlight the practical value of explicitly modeling structured missingness and demonstrate that our approach leads to more reliable predictions in the target domain. To summarize, this paper makes the following novel contributions:
\begin{itemize}
    \item We consider a new unsupervised domain adaptation setting where an entire label-background subpopulation is missing from the source domain, a scenario motivated by real-world data collection constraints.
    
    \item We develop a theoretical framework that enables accurate prediction in the target domain by estimating subpopulation proportions through distribution matching, and we provide rigorous guarantees and error bounds for our method.
    
    \item We demonstrate the effectiveness of our approach on both synthetic and real-world datasets. Our method outperforms standard baselines that ignore structured missingness, particularly in recovering performance on the unobserved subpopulation.
\end{itemize}

	\section{Related Work}
	
	\paragraph{Out-of-distribution (OOD) generalization}
	OOD generalization refers to the ability of a prediction model to perform well on test data drawn from a distribution that differs from the training data. 
    In our context, the subpopulation $(Y=1,A=1)$ in the target can be regarded as the OOD data while the other three subpopulations are in-distribution data.
For a comprehensive overview of OOD generalization, we refer the readers to the excellent survey \citep{liu2021towards}, which reviewed real-world datasets, evaluation protocols, and key challenges in this area.
	In the OOD generalization literature, different methods were proposed with different emphases: \cite{arjovsky2019invariant} emphasized the need to minimize invariant risk across different environments to ensure consistent model performance, whereas
	\cite{sagawa2019distributionally} underscored the importance of distributionally robust optimization (DRO) and various regularization techniques  
    in reducing performance disparities across subgroups.
	In addition, \cite{bahng2020learning} introduced adversarial training as a method for learning de-biased representations, which is critical for promoting fairness in machine learning models, and \cite{sohoni2020no} examined the issue of robustness in classification tasks involving coarse classes that contain finer subclasses, enhancing model performance across all subclasses. 

	\paragraph{OOD detection}
	OOD detection is the task of identifying inputs at test time that do not come from the same distribution as the training data.
	Its goal is to prevent a model from making confident but incorrect predictions on unfamiliar or anomalous inputs by flagging them as OOD.
	There are a variety of techniques developed for OOD detection in the literature. 
	For example, \cite{hendrycks2017baseline} introduced a simple yet effective method for detecting both misclassified and OOD inputs in neural networks. 
	\citet{liang2018enhancing} (ODIN) proposed an improved method for detecting OOD inputs by applying temperature scaling to the softmax outputs and adding small input perturbations during inference. 
	ODIN significantly outperformed previous baseline methods, including the maximum softmax probability approach, and set a new standard for OOD detection in classification tasks.
	Other techniques include but not limited to, outlier exposure \citep{hendrycks2018deep, papadopoulos2021outlier}, ConfGAN \citep{sricharan2018building} and OodGAN \citep{marek2021oodgan}.
	In addition, \cite{fort2021exploring} provided an extensive empirical study of OOD detection methods across a wide range of datasets, architectures, and training regimes.

	\paragraph{Spurious correlation}
	Spurious correlation is a major obstacle to OOD generalization, where models often rely on non-causal features that can degrade performance, particularly when these correlations do not generalize across domains. 
	For example, a model trained to classify cows might rely on green pastures (background) instead of the cow itself. On a desert background, it fails.
	This is also the case in the Waterbirds dataset where the spurious correlation exists between label $Y$ and background $A$. Different learning strategies were proposed to discover and mitigate the impact of spurious correlation on model performance, as well as to improve model robustness.
	For example, \cite{wu2023discover} introduced an attention-based approach to automatically identify spurious concepts and apply adversarial training to reduce reliance on them. 
    Another approach proposed by \cite{kumar2023causal}
	used causal regularization to detect and discourage spurious dependencies, allowing for scalable robustness across shifts.
	In addition, \cite{sagawa2020investigation} investigated why overparameterization exacerbates spurious correlations, and \cite{kirichenko2022last} found that retraining only the final layer on a small, balanced dataset can restore robustness against spurious correlations.
	Also, \cite{wang2024effect} developed a theoretical model to analyze the influence of spurious correlation strength, sample size, and feature noise on learning. 
	Spurious correlations were also investigated in feature learning \citep{izmailov2022feature, qiu2024complexity}, reinforcement learning \citep{ding2023seeing}, OOD detection \citep{ming2022impact}, and text classification \citep{wang2020identifying}.
	One can also resort to a comprehensive survey paper \citep{ye2024spurious} on this topic.

\paragraph{Imbalanced classification and few/zero-shot learning}
		In imbalanced classificaton, all classes are observed in the training data but appear with highly unequal frequencies, and many methods focus on reweighting or resampling strategies to improve performance, particularly on minority classes. Few-shot learning \citep{wang2020generalizing}, such as one-shot learning \citep{li2006one}, refers to a learning paradigm in which a model learns from a very small number of labeled examples and then generalizes to novel classes. Unlike traditional supervised learning that requires large amounts of data, few-shot learning leverages a limited number of examples together with task-specific prior knowledge or structural information. 
		Zero-shot learning \citep{wang2019survey} addresses tasks in which no labeled examples for the target classes are available during training, typically relying on auxiliary information such as semantic attributes or textual descriptions to relate seen and unseen classes.
		In contrast to these settings, our work considers a different challenge: a structured subgroup defined by a specific combination of $(Y,A)$ is completely absent from the source domain. 
		This missing-subgroup setting introduces a distinct type of distribution shift that cannot be addressed directly by existing methods designed for imbalanced classification or few/zero-shot learning.

\paragraph{Adversatial domain adaptation}
		Adversarial domain adaptation methods, such as DANN~\citep{ganin2016domain} and ADDA~\citep{tzeng2017adversarial}, aim to align the source and target feature distributions using adversarial training. 
		However, in the setting with structured missingness that we consider, these methods can fail because the unobserved target subgroup may be incorrectly aligned to a visible source subgroup, a phenomenon termed as ``collapse''. 
		This occurs because adversarial alignment enforces marginal distribution matching without modeling hidden subpopulation structure, which can lead to biased predictions. 
		Differently, our framework explicitly accounts for the unobservable source subpopulation, avoiding the collapse issue and providing more reliable adaptation in such scenarios.

	\section{Problem Setup and Notation}\label{sec:prelim}
	
	In our UDA setting,
	$Y \in \{0,1\}$ denotes the binary label, which is observed in the source domain but not in the target. 
	Let $A \in \{0,1\}$ be a binary background or environment variable and $\X \in \R^q$ a vector of all other attributes. 
	Let $R \in \{0,1\}$ be a domain indicator, with $R = 1$ corresponding to the source and $R = 0$ to the target. 
    In our notation, we consistently use the order of $(R, Y, A)$ for 
	indicator function $I_{\{\cdot\}}$, sample size $n_{\{\cdot\}}$, and population probability $p_{\{\cdot\}}$.
    
	We define $\pi=\pr(R=1)$. For $y=1,0$, $a=1,0$, we define $\alpha_{ya}=\pr(Y=y, A=a \mid R=1)$, and $\beta_{ya}=\pr(Y=y, A=a \mid R=0)$.
	For clarity, the total source sample size is $n_1 = n_{101} + n_{110} + n_{100}$, and the target sample size is $n_0 = n_{0\cdot1} + n_{0\cdot0}$, so that the total sample size is $n = n_1 + n_0$. 
	Table~\ref{tb:data} summarizes the observed data structure and key notation. 
	\begin{table}[!htbp]
		\centering
		\caption{Data structure and key notation used throughout the paper.}\label{tb:data}
		\resizebox{\textwidth}{!}{
			\begin{tabular}{cccccccc}
				\hline
				& $R$ & $Y$ & $A$ & $\X$ & Sample Size & Proportion & Probability\\
				\hline
				\multirow{3}{*}{Source} 
				&\multirow{3}{*}{\makecell{1\\1\\1}} & \multirow{3}{*}{\makecell{0\\1\\0}} & \multirow{3}{*}{\makecell{1\\0\\0}} & \multirow{3}{*}{\makecell{\checkmark\\\checkmark\\\checkmark}} 
				& \multirow{3}{*}{\makecell{$n_{101}$\\$n_{110}$\\$n_{100}$}} & \multirow{3}{*}{\makecell{$p_{101}=\alpha_{01} \pi$\\$p_{110}=\alpha_{10} \pi$\\$p_{100}=\alpha_{00} \pi$}} 
				& $\xi_1(\x)=\pr(Y=1 \mid \X=\x, A=1, R=1)\equiv 0$\\
				& &  &  & & & &$\xi_0(\x)=\pr(Y=1 \mid \X=\x, A=0, R=1)$ \\
				& &  &  & & & &$\xi(\x)=\pr(Y=1 \mid \X=\x, R=1)$ \\
				\hline
				\multirow{4}{*}{Target} 
				&0&?&1&\checkmark
				& \multirow{2}{*}{$n_{0\cdot1}$} & \multirow{2}{*}{$p_{0\cdot1}=(\beta_{11}+\beta_{01})(1-\pi)$} &  \multirow{4}{*}{\makecell{$\eta_1(\x)=\pr(Y=1 \mid \X=\x, A=1, R=0)$\\$\eta_0(\x)=\pr(Y=1 \mid \X=\x, A=0, R=0)$ \\$\eta(\x)=\pr(Y=1 \mid \X=\x, R=0)$}} \\
				&0 &? &1 &\checkmark & & & \\
				\cmidrule(r){2-7}
				&0 &? &0 &\checkmark & \multirow{2}{*}{$n_{0\cdot0}$} &\multirow{2}{*}{$p_{0\cdot0}=(\beta_{10}+\beta_{00})(1-\pi)$} &\\
				&0 &? &0 &\checkmark & & &\\
				\hline
		\end{tabular}}
	\end{table}

    In our context, we have
	$\alpha_{10} + \alpha_{01} + \alpha_{00} = 1, \quad \alpha_{11} = 0, \quad \text{and} \quad 0 < \alpha_{10}, \alpha_{01}, \alpha_{00} < 1$.
	The parameters can be consistently estimated by
	\be\label{est:alpha}
	\widehat{\alpha}_{10} = n_{110}/n_1, \quad
	\widehat{\alpha}_{01} = n_{101}/n_1, \quad
	\widehat{\alpha}_{00} = n_{100}/n_1, \quad
	\widehat{\pi} = n_1/n.
	\ee

    More formally, $\alpha_{11}=0$ is the following structured missingness condition:
	\begin{equation}\label{eq:ood}
		\pr(Y=1, A=1 \mid R=1) = 0.
	\end{equation}
	Note that this assumption is made without loss of generality, as alternative combinations, such as $(Y=0, A=1)$, $(Y=1, A=0)$, or $(Y=0, A=0)$, can be similarly assumed to have zero probability. 
	
    To characterize the distributional connection between the two domains, we impose a structured conditional invariance assumption:
	\be\label{eq:invariance}
	p(\X \mid Y, A, R=1) = p(\X \mid Y, A, R=0) = p(\X\mid Y,A) \equiv p_{ya}(\X),
	\ee
	that is, the conditional distribution of features $\X$ given $(Y,A)$ remains the same across domains.
	This can be regarded as a conditional version, or, more nuanced version, of label shift where the marginal distribution of labels (now, the combination of both label and background) varies across domains (e.g., \citealp{du2014semi, garg2020unified, iyer2014maximum, lipton2018detecting, nguyen2016continuous, tasche2017fisher, tian2023elsa, zhang2013domain, lee2025doubly, lee2025efficient}). 
	This type of invariance assumptions was also postulated in other contexts such as in causal inference; see, e.g., \cite{peters2016causal}.
	It indicates, conditional on background $A$, the label shift assumption holds. 
	It is equivalent to $p(R|\X,Y,A)=p(R|Y,A)$, the independence between $R$ and $\X$, conditional on $(Y, A)$.
	In practice, this assumption may be suitable in many applications. Below we give two examples to illustrate the rationality of this assumption. For instance, we aim to predict user clicks on advertisements for a new batch of users (target domain, $R=0$) using historical data (source domain, $R=1$). Conditional on the advertisement type $A$ and whether the user clicks $Y$, the distribution of browsing behavior features $\X$ is assumed to remain stable across time periods. This is because user clicks are fundamentally determined by ad content and user interests, not by the time period in which data are collected. As another example, suppose we have datasets from two hospitals ($R=1$ indicates the source hospital and $R=0$ indicates the target hospital). Here, $\X$ represents imaging features, $Y$ is the disease type, and $A$ denotes patient attributes such as gender or age group. Then, conditional on the disease type $Y$ and demographic attributes $A$, the distribution of imaging features $\X$ is expected to remain the same across hospitals. This is because imaging characteristics for a given disease and demographic group are not systematically altered by the hospital. The main difference between hospitals lies in sampling proportions rather than in conditional distributions.

	This framework captures real-world scenarios in which a certain label-background subpopulation is absent from the source domain. 
	For example, in the Waterbirds dataset, waterbirds on water backgrounds (label $Y = 1$, background $A = 1$) are rarely observed, or even completely absent, in the training set, making the adaptation to target domains particularly challenging.
    For illustration purposes, Table~\ref{tb:examples} below shows the three observed subpopulations in the source as well as the four subpopulations in the target in two real-world datasets.
	\begin{table}[!htbp]
		\centering
		\caption{
			Illustrations in Waterbirds and CelebA datasets. Note that the $(Y=1,A=1)$ combination does not exist in the source domain but does in the target domain.
			}
		\label{tb:examples}
		\resizebox{\textwidth}{!}{
			\renewcommand{\arraystretch}{1.5}
			\setlength{\tabcolsep}{8pt}
			\begin{tabular}{ccc}
				\toprule
				\textbf{Dataset} & \textbf{Source Data} & \textbf{Target Data} \\
                     $(Y,A)$&$(0,1)$\;\qquad\qquad \qquad\qquad  $(1,0)$\;\qquad\qquad\qquad\qquad $(0,0)$&$(1,1)$\qquad\qquad\qquad\qquad $(0,1)$\qquad\quad\qquad \qquad\qquad $(1,0)$\qquad\qquad\qquad\qquad$(0,0)$\\
				\midrule
				\multirow{2}{*}{Waterbirds} 
				& 
				\begin{tabular}{cccc}
					\makecell{\textcolor{red}{Y=0:Landbird}\\ A=1:Water background\\ \includegraphics[width=2cm]{./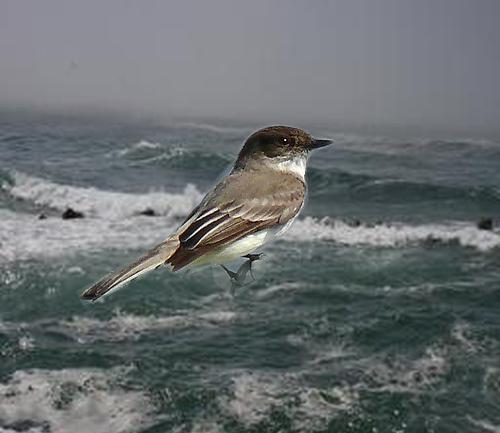}} &
					\makecell{\textcolor{green}{Y=1:Waterbird}\\ \textcolor{blue}{A=0:Land background}\\ \includegraphics[width=2cm]{./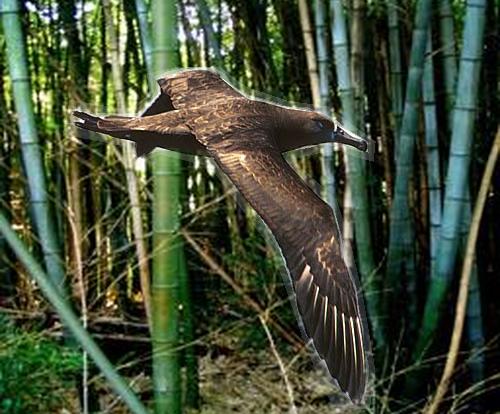}} &
                    	\makecell{\textcolor{red}{Y=0:Landbird}\\ \textcolor{blue}{A=0:Land background}\\ \includegraphics[width=2cm]{./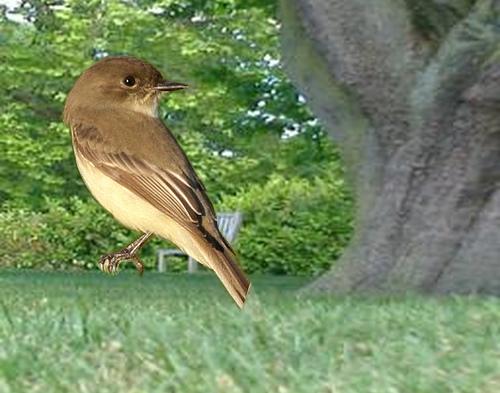}} \\
				\end{tabular}
				&
				\begin{tabular}{cccc}
					\makecell{\textcolor{green}{Y=1:Waterbird}\\ A=1:Water background\\ \includegraphics[width=2cm]{./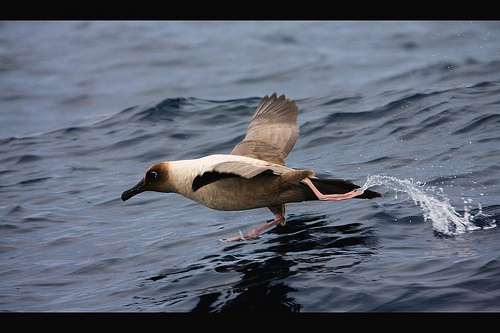}} &
					\makecell{\textcolor{red}{Y=0:Landbird}\\ A=1:Water background\\ \includegraphics[width=2cm]{./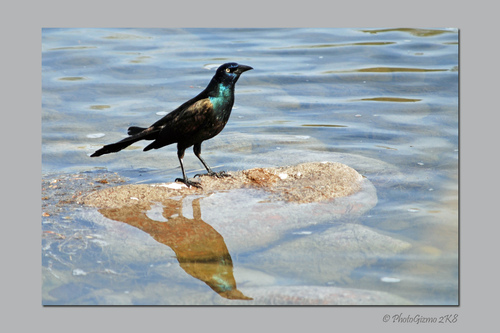}} &
					\makecell{\textcolor{green}{Y=1:Waterbird}\\ \textcolor{blue}{A=0:Land background}\\ \includegraphics[width=2cm]{./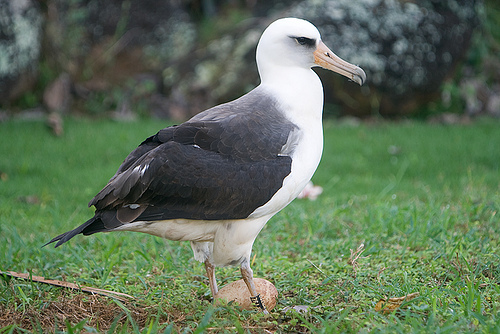}} &
                    \makecell{\textcolor{red}{Y=0:Landbird}\\ \textcolor{blue}{A=0:Land background}\\ \includegraphics[width=2cm]{./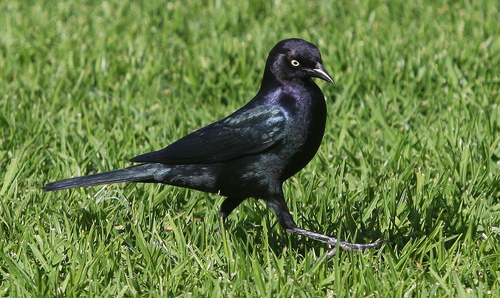}} \\
				\end{tabular}
				\\
				\midrule
				\multirow{2}{*}{CelebA} 
				& 
				\begin{tabular}{cccc}
					\makecell{\textcolor{red}{Y=0:Blond hair}\\    A=1:Male \\ \includegraphics[width=2cm,height=2cm]{./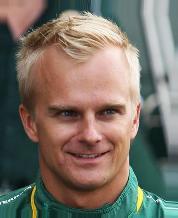}}& \qquad\quad
                    \makecell{\textcolor{green}{Y=1:Dark hair}\\ \textcolor{blue}{A=0:Female} \\ \includegraphics[width=2cm,height=2cm]{./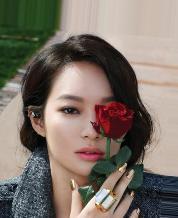}}&\qquad
					\makecell{\textcolor{red}{Y=0: Blond hair}\\ \textcolor{blue}{A=0:Female }\\ \includegraphics[width=2cm,height=2cm]{./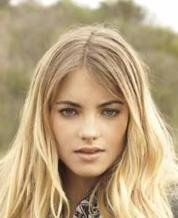}} \\
				\end{tabular}
				&
                \begin{tabular}{cccc}
					\makecell{\textcolor{green}{Y=1:Dark hair}\\ A=1:Male \\ \includegraphics[width=2cm,height=2cm]{./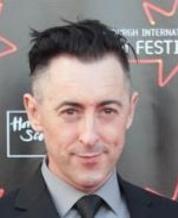}}\ \ \ \ \ & \ \ \ \ \ \ \
					\makecell{\textcolor{red}{Y=0:Blond hair}\\  A=1:Male\\ \includegraphics[width=2cm,height=2cm]{./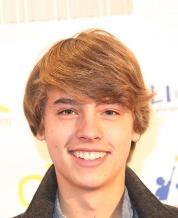}}\ \ \ \ \ \ \ & \ \ \ \ \ \ \
                    \makecell{\textcolor{green}{Y=1: Dark hair}\\ \textcolor{blue}{A= 0:Female}\\ \includegraphics[width=2cm,height=2cm]{./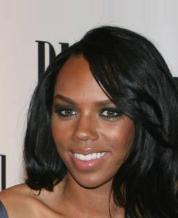}}\ \ \ \ \ \ \ & \ \ \ \ \ \ \
					\makecell{\textcolor{red}{Y=0:Blond hair}\\ \textcolor{blue}{A=0:Female} \\ \includegraphics[width=2cm,height=2cm]{./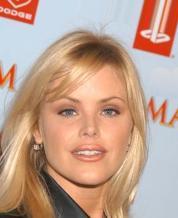}} \\
				\end{tabular}\\
				\bottomrule
		\end{tabular}}
	\end{table}

	\section{Proposed Methodology}\label{sec:method}

    Our goal in this work is to correctly identify and successfully implement, under our UDA setting, the two background-specific predictive probabilities $\eta_1(\x)$ and $\eta_0(\x)$ and the overall predictive probability $\eta(\x)$, in the target domain. 
    All of the three probabilities were precisely defined in Table~\ref{tb:data}.

    \subsection{The naive benchmarks}\label{sec:naive}
    
    We consider two types of naive benchmarks.
    The first, termed \textbf{Naive1} throughout, is to blindly apply the three source domain predictive probabilities $\xi_1(\x)$, $\xi_0(\x)$, and $\xi(\x)$ to the target, ignoring the structured missingness issue in this problem.
    More precisely, the first naive benchmark \emph{incorrectly} treats $\xi_1(\x)$, $\xi_0(\x)$, and $\xi(\x)$ as $\eta_1(\x)$, $\eta_0(\x)$, and $\eta(\x)$, respectively.
    Note that, even though $\xi_1(\x)\equiv 0$ as indicated in Table~\ref{tb:data}, one can still use the observable data to implement the overall source domain predictive probability $\xi(\x)$ and the other background-specific predictive probability $\xi_0(\x)$ as
    \begin{align}
    	\xi(\x) = \pr(Y = 1 \mid \x, R = 1), \mbox{ and } \xi_0(\x) = \pr(Y = 1 \mid \x, R = 1, A = 0).\label{eq:xiandxi0}
    \end{align}
    The second type, termed \textbf{Naive2}, is to ignore the environment $A$ and simply impose a label shift assumption between the source and target domains.
    That is, one \emph{incorrectly} specifies the following $\gamma(\x)$ as the overall target domain predictive probability $\eta(\x)$.
    After some simple algebra, one can compute
    \begin{align}\label{eq:naive2}
    	\gamma(\x) = \frac{\frac{\beta_{11}+\beta_{10}}{\alpha_{11}+\alpha_{10}}\xi(\x)}{ \frac{\beta_{11}+\beta_{10}}{\alpha_{11}+\alpha_{10}}\xi(\x) + \frac{\beta_{01}+\beta_{00}}{\alpha_{01}+\alpha_{00}}{\{1-\xi(\x)\}}}.
    \end{align}

	\subsection{Model adaptation from source to target}\label{sec:target}
	
	The most challenging aspect of this work is to adapt the model for the $A=1$ background since the component $(Y=1,A=1)$ is entirely absent in the source.	Nevertheless, we can still \emph{correctly} derive the three predictive probabilities for the target domain, as shown below.
	\begin{proposition}\label{pro:relation}
    Define conditional probabilities $\tau_0(\x) = \pr(A=1 \mid \X=\x, R=0)$ and 
    		\begin{align}
			\kappa(\x) = \pr(R=1\mid \x,A=1),\label{eq:kappa}
		\end{align}
        both of which can be implemented using the observed data in our UDA setting.
		Then the three predictive probabilities in the target domain are given by:
\begin{equation} \label{eq:objection}
\begin{aligned}
		\eta_1(\x) 
		&= 1 - \frac{\beta_{01}}{\alpha_{01}} \cdot \frac{1 - \pi}{\pi} \cdot \frac{\kappa(\x)}{1-\kappa(\x)}, \quad
		\eta_0(\x) =
		\frac{
			\frac{\beta_{10}}{\alpha_{10}} \xi_0(\x)
		}{
			\frac{\beta_{10}}{\alpha_{10}} \xi_0(\x)
			+
			\frac{\beta_{00}}{\alpha_{00}} \{1 - \xi_0(\x)\}
		}, \mbox{ and }\\
		\eta(\x) &= \eta_1(\x) \tau_0(\x) + \eta_0(\x) \{1 - \tau_0(\x)\}.
\end{aligned}
\end{equation}
	\end{proposition}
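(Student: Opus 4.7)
The plan is to derive the three formulas in \eqref{eq:objection} separately. The identity for $\eta(\x)$ is immediate: conditioning on $A$ in the target domain and applying the law of total probability gives
\[
\eta(\x) = \pr(Y=1 \mid \x, A=1, R=0)\,\pr(A=1\mid \x, R=0) + \pr(Y=1 \mid \x, A=0, R=0)\,\pr(A=0\mid \x, R=0),
\]
which matches the stated expression by the definitions of $\eta_1$, $\eta_0$, and $\tau_0$. So this piece is essentially bookkeeping.

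For $\eta_0(\x)$, I would apply Bayes' rule in the target together with the invariance assumption \eqref{eq:invariance}, which gives $p(\x\mid Y=y, A=0, R=0) = p_{y0}(\x)$, and use $\pr(Y=y\mid A=0, R=0) = \beta_{y0}/(\beta_{10}+\beta_{00})$ to obtain
\[
\eta_0(\x) = \frac{p_{10}(\x)\,\beta_{10}}{p_{10}(\x)\,\beta_{10} + p_{00}(\x)\,\beta_{00}}.
\]
Running the analogous Bayes computation in the source yields the same expression with $\alpha_{y0}$ in place of $\beta_{y0}$, which equals $\xi_0(\x)$. Solving the source identity for the unknown density ratio gives $p_{10}(\x)/p_{00}(\x) = (\alpha_{00}/\alpha_{10})\cdot \xi_0(\x)/[1-\xi_0(\x)]$, and substituting back produces the stated closed form.

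The main obstacle is $\eta_1(\x)$, because the subpopulation $(Y=1, A=1)$ is absent from the source, so I cannot directly mirror the argument used for $\eta_0$. My plan is to exploit the structured missingness \eqref{eq:ood}, which forces $\pr(Y=0 \mid A=1, R=1) = 1$ and hence $p(\x\mid A=1, R=1) = p_{01}(\x)$; by invariance this also equals $p(\x\mid Y=0, A=1, R=0)$. Bayes' rule in the target then yields
\[
1 - \eta_1(\x) = \frac{p(\x\mid Y=0, A=1, R=0)\,\pr(Y=0\mid A=1, R=0)}{p(\x\mid A=1, R=0)} = \frac{p(\x\mid A=1, R=1)}{p(\x\mid A=1, R=0)} \cdot \frac{\beta_{01}}{\beta_{11}+\beta_{01}}.
\]
I would then rewrite the density ratio through the domain discriminator $\kappa$: Bayes applied to $\kappa(\x)$ gives $p(\x\mid A=1, R=1)/p(\x\mid A=1, R=0) = [\kappa(\x)/\{1-\kappa(\x)\}] \cdot [\pr(R=0\mid A=1)/\pr(R=1\mid A=1)]$, where the conditional domain odds equal $(1-\pi)(\beta_{11}+\beta_{01})/(\pi\alpha_{01})$ by marginalizing the $(R, Y, A)$ joint. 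Multiplying the pieces, the $\beta_{11}+\beta_{01}$ factors cancel and only the ratio $\beta_{01}/\alpha_{01}$ survives, giving the claimed formula. The key conceptual point, which is what makes the derivation nontrivial, is recognizing that although the individual $\beta_{ya}$'s are not identifiable from the observed data, the resulting expressions for $\eta_1$ and $\eta_0$ depend only on the scale-free combinations $\beta_{01}/\alpha_{01}$, $\beta_{10}/\alpha_{10}$, and $\beta_{00}/\alpha_{00}$, which is precisely what the subsequent distribution-matching step is designed to estimate.
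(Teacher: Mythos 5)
Your proposal is correct and follows essentially the same route as the paper: the law of total probability over $A$ for $\eta(\x)$, Bayes' rule plus the invariance assumption \eqref{eq:invariance} for $\eta_0(\x)$, and for $\eta_1(\x)$ the observation that the source's $A=1$ slice is purely $(Y=0,A=1)$ so that the unidentifiable ratio $p_{01}(\x)/p(\x\mid R=0,A=1)$ can be expressed through the domain discriminator $\kappa(\x)$ (the paper phrases this by solving the mixture identity for $p_{11}(\x)$ and forming $\eta_1$ directly, whereas you compute $1-\eta_1$ as the $Y=0$ posterior, but the algebra is identical, including the cancellation of the $\beta_{01}+\beta_{11}$ factors).
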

	The proof of this result is provided in Appendix~\ref{sec:supp:method}.
	Proposition~\ref{pro:relation} illustrated that, in general, the naive benchmarks presented in Section~\ref{sec:naive} fail.
	There are no explicit relations between $\eta_1(\x)$ and $\xi_1(\x)\equiv 0$ or between $\eta(\x)$ and $\xi(\x)$.
	For the relation between $\eta_0(\x)$ and $\xi_0(\x)$, they coincide only in the special case that $\beta_{10}/\alpha_{10}=\beta_{00}/\alpha_{00}$, which corresponds to a proportionality condition between the class-conditional densities across domains. Outside of this narrow scenario, the naive approach systematically misestimates the target posterior, leading to biased predictions.
	
	This result also implies that model adaptation fundamentally relies on estimating the proportions of key subgroups in the target population. In particular, for individuals with $A=1$, one only needs to estimate $\beta_{01}$, while for those with $A=0$, it suffices to estimate the ratio $\beta_{10}/\beta_{00}$.
	Denote $\bb = (\beta_{10}, \beta_{00})\trans$. It can be seen that, accurate estimation of the parameter $\bb$ in the target domain enables valid model adaptation across domains.
    Before developing methods for estimating $\bb$ in Section~\ref{sec:estbeta}, we first present some model identification considerations.
	
	\subsection{Model identification considerations}\label{sec:model}
	
	The identifiability structure of our problem closely resembles that of the \textit{open set label shift} (OSLS) framework \citep{garg2022domain}. 
	Note that our target distribution 
	consists of a mixture over four joint distributions: $\pr(Y=1, A=1)$, $\pr(Y=1, A=0)$, $\pr(Y=0, A=0)$, and $\pr(Y=0, A=1)$. 
	By treating the joint label $(Y, A)$ as the response, this setting can be viewed as a special case of the OSLS framework. 
	However, our setup is considerably simpler due to the availability of the auxiliary variable $A$ in the target domain.
	As a result, we can restrict attention to the subset $A = 1$, thereby discarding the $A = 0$ portion of the distribution. 
	This reduction simplifies the problem to recovering $\pr(Y=1, A=1)$ from a mixture of $\pr(Y=1, A=1) $ and $\pr(Y=0, A=1) $, given direct access to $\pr(Y=0, A=1)$. 
	This is a canonical \textit{positive-unlabeled (PU)} learning problem. 
	Identifiability in this setting is governed by the standard \textit{anchor set condition} (see Definition~8 of \cite{ramaswamy2016mixture}): there exists a measurable subset $\x_{\mathrm{anchor}}\in \mX $ such that
	\bse
	p(\X\in\x_{\mathrm{anchor}}|Y=0,A=1)>0 \quad \text{and} \quad \frac{p(\X\in\x_{\mathrm{anchor}}|Y=1,A=1)}{p(\X\in\x_{\mathrm{anchor}}|Y=0,A=1)}=0.
	\ese
	This condition ensures that the positive class $(Y = 1, A = 1)$ has no support on a subset of the feature space that is occupied by the negative class $ (Y = 0, A = 1) $, which is necessary for identifiability. Under the assumption (\ref{eq:invariance}), the primary difficulty arises from the fact that the component $ p_{11}(\x)$, corresponding to the subgroup $(Y = 1, A = 1)$, is not directly observable in either the source or target domain.
	
	To elucidate this observation, we denote $\p_0(\x) = \{ p_{10}(\x), p_{00}(\x) \}\trans$, and then the observed data log-likelihood of one generic observation in our UDA setting is proportional to:
	\bse
	&&I_{110} \log p_{10}(\x) 
	+ I_{101} \log p_{01}(\x) 
	+ I_{100} \log p_{00}(\x) \\
	&& + I_{0\cdot1} \log \left\{ \beta_{11} p_{11}(\x) + (1 - \beta_{11} - \bb^\top \mathbf{1}) p_{01}(\x) \right\}
	+ I_{0\cdot0} \log \left\{ \bb^\top \p_0(\x) \right\}.
	\ese
	In this formulation, the parameter with finite dimension is $\bb$. 
	The model involves four nonparametric nuisance components: $p_{11}(\x)$, $p_{10}(\x)$, $p_{01}(\x)$, and $p_{00}(\x)$. 
		\begin{lemma}\label{lem:iden}
		Assume $\beta_{11} = 0$ and $p_{10}(\x) \neq p_{00}(\x)$, then all components except $ p_{11}(\x)$ are identifiable.
		Assume $0<\beta_{11} <1$ and is known, and $p_{10}(\x) \neq p_{00}(\x)$, then all components in the model are identifiable.  
	\end{lemma}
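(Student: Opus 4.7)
The strategy is to read off, stratum by stratum, the five atomic densities that appear in the observed-data log-likelihood, and then back out the remaining unknowns sequentially. The three source indicators $I_{110}$, $I_{101}$, and $I_{100}$ select samples drawn exactly from $p_{10}(\x)$, $p_{01}(\x)$, and $p_{00}(\x)$, so these three conditional densities are identified immediately. The observed stratum frequencies simultaneously identify $\pi$, $\alpha_{10}, \alpha_{01}, \alpha_{00}$, and the two target sub-totals $\beta_{11}+\beta_{01}$ and $\beta_{10}+\beta_{00}$.

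The first substantive step is to decompose the target $A=0$ stratum, whose joint density in $(\x,A)$ is the two-component mixture $\beta_{10}\,p_{10}(\x)+\beta_{00}\,p_{00}(\x)$. Since $p_{10}$ and $p_{00}$ are already identified, the assumption $p_{10}(\x)\neq p_{00}(\x)$ guarantees that they are linearly independent as functions of $\x$: any proportionality $p_{10}=c\,p_{00}$ would, upon integration, force $c=1$ and contradict the hypothesis. The mixture coefficients are therefore uniquely determined, and both $\beta_{10}$ and $\beta_{00}$ are identified.

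It remains to analyze the target $A=1$ stratum, whose joint density is $\beta_{11}\,p_{11}(\x)+\beta_{01}\,p_{01}(\x)$, and here the two cases of the lemma diverge. When $\beta_{11}=0$, the density reduces to $\beta_{01}\,p_{01}(\x)$; integrating (or equivalently subtracting $\beta_{11}=0$ from the already-identified sub-total $\beta_{11}+\beta_{01}$) pins down $\beta_{01}$, but $p_{11}$ has disappeared from the observed-data law altogether and is therefore manifestly non-identifiable, exactly matching the claim. When $\beta_{11}\in(0,1)$ is known, $\beta_{01}$ is again recovered by subtraction, after which $p_{11}(\x)$ is obtained as the unique solution of $\beta_{11}\,p_{11}(\x)=q(\x)-\beta_{01}\,p_{01}(\x)$, where $q$ denotes the already-identified target-$A=1$ joint density and the division is well defined because $\beta_{11}>0$. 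The only genuinely delicate step in the entire argument is the mixture decomposition in the target $A=0$ stratum, where the hypothesis $p_{10}(\x)\neq p_{00}(\x)$ is indispensable; without it only the aggregate $\beta_{10}+\beta_{00}$ would be identifiable, and both parts of the lemma would fail.
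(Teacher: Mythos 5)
Your proposal is correct and follows essentially the same route as the paper's proof: identify $\pi$, the $\alpha$'s, and $p_{10},p_{01},p_{00}$ directly from the source strata, integrate the target $A=0$ mixture to pin down $\beta_{10}+\beta_{00}$, use $p_{10}(\x)\neq p_{00}(\x)$ (equivalently, linear independence of two distinct densities) to separate $\beta_{10}$ and $\beta_{00}$, and recover $p_{11}(\x)$ from the target $A=1$ stratum exactly when $\beta_{11}>0$. The only difference is presentational --- you argue constructively by backing out each unknown, whereas the paper phrases the same steps as a uniqueness argument over two candidate parameter sets --- and both correctly isolate $p_{10}\neq p_{00}$ as the indispensable hypothesis.
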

	The proof of Lemma~\ref{lem:iden} is provided in Appendix~\ref{sec:supp:method}.
	The identification conditions in Lemma~\ref{lem:iden} are intuitive and reasonable.
	If $\beta_{11}=0$, it degenerates to the
	situation that the source and target domains have the same support on both label $Y$ and background $A$, then the component $p_{11}(\x)$ is no longer
	relevant. 
	Also, if $p_{10}(\x) = p_{00}(\x)$, the subpopulations
	of $(Y=1, A=0)$ and $(Y=0, A=0)$ become indistinguishable, and hence the individual probabilities $\beta_{10}$ and $\beta_{00}$ are not separately identifiable. 
	Overall, these conditions are natural to ensure the problem is well-posed.

	\subsection{Estimating parameters of interest}\label{sec:estbeta}	

	To estimate the parameter $\bb$, we consider the distribution of attributes $\x$ in the subpopulation defined by $(R=0, A=0)$. 
	By the law of total probability, we have
	\be
	p(\x\mid R=0,A=0)\pr(R=0,A=0)
	= p_{10}(\x)\, \beta_{10} (1 - \pi) + p_{00}(\x)\, \beta_{00} (1 - \pi),\label{eq:distmatch}
	\ee
    subject to the constraint
	\be\label{eq:constraint}
	\pr(R=0, A=0) = \beta_{10}(1 - \pi) + \beta_{00}(1 - \pi).
	\ee
	Note that the distribution $p(\x \mid R=0, A=0)$ is identifiable from the target population. 
	The distributions $p_{10}(\x)$ and $p_{00}(\x)$ can be consistently estimated from the source population subgroups $(R=1, Y=1, A=0)$ and $(R=1, Y=0, A=0)$, respectively.
	Thus, the parameters $\bb = (\beta_{00}, \beta_{10})\trans$ can be estimated by minimizing a suitable discrepancy measure between the two sides of~(\ref{eq:distmatch}), such as an $L_2$ norm or a divergence-based criterion (e.g., Kullback--Leibler divergence), subject to the constraint in~(\ref{eq:constraint}).
    Therefore, we reformulate the estimation of \(\bb\) as a constrained distribution matching problem:
\begin{align}\label{eq:min}
    \wh\bb=\argmin_{\bb}D\left\{ \wh{p}(\x\mid R=0,A=0) \middle\| \{\wh{p}_{10}(\x)\beta_{10}+\wh{p}_{00}(\x)\beta_{00}\}/\wh\pr(A=0|R=0)\right\},
\end{align}
subject to $\wh\pr(A=0 |R=0) = \beta_{10} + \beta_{00}$, where
	$D$ denotes a discrepancy measure between probability distributions over the covariate space $\mX$. Among various choices for $D$, we adopt the Kullback–Leibler (KL) divergence due to its favorable analytical and computational properties.
	To facilitate optimization, we relax the constraint in~(\ref{eq:min}) and reformulate the objective under KL divergence, as summarized in the following lemma.
		\begin{lemma}\label{lem:max}
		Let $D$ be the Kullback–Leibler divergence. Then the solution $\wh\beta_{10}$ to the minimization problem (\ref{eq:min}) is given by
		\be\label{est:beta}
		\arg\max_{\beta_{10}}
\wh{E}\left(\log [\wh\xi_0(\X)  \wh{b}_1^{-1}\beta_{10}+\{1-\wh\xi_0(\X)\}  (1-\wh{b}_1)^{-1}(\wh\varrho-\beta_{10})]\middle|R=0,A=0\right),
		\ee
		where, for simplicity, $b_1=\pr(Y=1|R=1,A=0)$, $\varrho=\pr(A=0|R=0)$ and 
        $\wh{E}$ represents the empirical average.
	\end{lemma}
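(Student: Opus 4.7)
The plan is to start from the KL-divergence objective in \eqref{eq:min}, use the equality constraint to remove $\beta_{00}$, drop additive constants in $\beta_{10}$, and then re-express the two class-conditional densities $p_{10}(\x)$ and $p_{00}(\x)$ via Bayes' rule in terms of the source posterior $\xi_0(\x)$. Once this substitution is made, the densities $p_{10}$ and $p_{00}$---which are hard to estimate directly---disappear from the $\beta_{10}$-dependent part, leaving an objective involving only $\xi_0(\x)$, $b_1$, and $\varrho$, quantities that are all estimable from the observed data.

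First, applying the constraint $\beta_{10}+\beta_{00}=\varrho$ gives $\beta_{00}=\varrho-\beta_{10}$, so the mixture inside the KL divergence becomes $m(\x;\beta_{10})=\{p_{10}(\x)\beta_{10}+p_{00}(\x)(\varrho-\beta_{10})\}/\varrho$. By definition,
\begin{equation*}
D_{\mathrm{KL}}\bigl(p(\x\mid R{=}0,A{=}0)\,\|\,m(\x;\beta_{10})\bigr) = \text{(terms free of }\beta_{10}\text{)} - E\{\log m(\X;\beta_{10})\mid R{=}0,A{=}0\},
\end{equation*}
so minimizing the KL divergence over $\beta_{10}$ is equivalent to maximizing $E\{\log m(\X;\beta_{10})\mid R{=}0,A{=}0\}$. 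The factor $1/\varrho$ contributes only an additive constant and can be discarded.

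Second, I would invoke the conditional invariance \eqref{eq:invariance} and Bayes' rule conditional on $(R=1,A=0)$. This yields $p_{10}(\x)=p(\x\mid R{=}1,A{=}0)\,\xi_0(\x)/b_1$ and $p_{00}(\x)=p(\x\mid R{=}1,A{=}0)\{1-\xi_0(\x)\}/(1-b_1)$. Substituting into the mixture,
\begin{equation*}
p_{10}(\x)\beta_{10}+p_{00}(\x)(\varrho-\beta_{10}) = p(\x\mid R{=}1,A{=}0)\bigl[\xi_0(\x)b_1^{-1}\beta_{10}+\{1-\xi_0(\x)\}(1-b_1)^{-1}(\varrho-\beta_{10})\bigr],
\end{equation*}
and the $\log p(\X\mid R{=}1,A{=}0)$ piece contributes an additive constant (in $\beta_{10}$) under the conditional expectation, and can also be dropped. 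Finally, replacing the population expectation and the nuisance quantities $\xi_0,b_1,\varrho$ by the corresponding empirical estimates $\wh E$, $\wh\xi_0$, $\wh b_1$, $\wh\varrho$ delivers exactly the objective in \eqref{est:beta}.

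The only mildly delicate point is ensuring that the Bayes' rule substitution is valid uniformly in $\x$ on the support of $p(\x\mid R{=}0,A{=}0)$; this is handled by the invariance assumption \eqref{eq:invariance}, which guarantees that $p(\x\mid R{=}1,A{=}0)$ has the same support as $p(\x\mid R{=}0,A{=}0)$ whenever $\beta_{10}+\beta_{00}>0$, and by the identifiability conditions of Lemma~\ref{lem:iden} which exclude the degenerate case $p_{10}=p_{00}$. No compactness or differentiation argument is needed, since the reduction is purely algebraic after the constraint elimination.
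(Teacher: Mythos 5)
Your proposal is correct and follows essentially the same route as the paper's proof: eliminate $\beta_{00}$ via the constraint, drop the $\beta_{10}$-free part of the KL divergence, and use Bayes' rule under the invariance assumption \eqref{eq:invariance} to rewrite $p_{10}(\x)$ and $p_{00}(\x)$ as $p(\x\mid R{=}1,A{=}0)\xi_0(\x)/b_1$ and $p(\x\mid R{=}1,A{=}0)\{1-\xi_0(\x)\}/(1-b_1)$, so that the unknown densities cancel into an additive constant. The paper carries out the same cancellation by explicitly adding and subtracting $\log p(\x\mid R{=}1,A{=}0)$ inside the integral, which is just a cosmetic difference from your presentation.
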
 
The proof of Lemma \ref{lem:max} is provided in Appendix \ref{sec:supp:method}. A key advantage of minimizing the KL divergence is that it circumvents the need to explicitly estimate the generative probabilities $p_{10}(\x)$ and $p_{00}(\x)$, which are often difficult to model accurately in high dimensions. Instead, it suffices to estimate one background-specific prediction probability  $\xi_0(\x)$ using standard classification techniques on the source domain restricted to $A=0$.

Finally, based on all of the above discussions, we summarize the implementation details of our proposed method in Algorithm~\ref{alg}.

	\begin{algorithm}[htbp]
		\caption{Implementation details of our proposed method.}\label{alg}
		\label{alg:estimator}
		\renewcommand{\algorithmicrequire}{\textbf{Input:}}
		\renewcommand{\algorithmicensure}{\textbf{Output:}}
		\begin{algorithmic}[1]
			\Require Observed  source domain data $\{(\X_i, Y_i, A_i, R_i=1)\}_{i=1}^{n_1}$ and target domain data $\{(\X_i, A_i, R_i=0)\}_{i=1}^{n_0}$.
			\Ensure Estimated benchmark predictive probabilities $\wh\xi(\x)$ and $\wh\xi_0(\x)$, proposed predictive probabilities for the target $\wh\eta(\x)$, $\wh\eta_1(\x)$ and $\wh\eta_0(\x)$; and subpopulation proportions $\wh\alpha_{ya}$, $\wh\beta_{ya}$.
			\State Estimate $\xi(\x)$ (defined in (\ref{eq:xiandxi0})) using data $\{(\X_i,Y_i,R_i=1):i=1,\cdots, n_1\}$, as $\wh\xi(\x)$;
			\State Estimate $\xi_0(\x)$ (defined in (\ref{eq:xiandxi0})) using data $\{(\X_i,Y_i, A_i=0, R_i=1):i=1,\cdots, n_1\}$, as $\wh\xi_0(\x)$;
			\State Estimate $\tau_r(\x)$ (defined in Proposition \ref{pro:relation}) using data $\{(\X_i,A_i,R_i=r):i=1,\cdots n_r\}, r=0,1$, as $\wh\tau_r(\x)$;
			\State Estimate $\kappa(\x)$ (defined in (\ref{eq:kappa})) using data $\{(\X_i,R_i,A_i=1):i=1,\cdots n\}$, as $\wh\kappa(\x)$;
			\State Estimate $\bb$ and $\alpha_{y,a}$ following (\ref{est:beta}) and (\ref{est:alpha}), as $\wh\bb$ and $\wh\alpha_{ya}$ for $(y,a)\in \{0,1\}$;
        \State Estimate $\eta_1(\x)$, $\eta_0(\x)$ and $\eta(\x)$ following (\ref{eq:objection}), as $\wh\eta_1(\x)$, $\wh\eta_0(\x)$ and $\wh\eta(\x)$.
		\end{algorithmic}
	\end{algorithm}

The above method adopts the idea of distribution matching. Alternatively, one may consider matching only certain moments rather than the full distribution. Due to space constraints, we defer the details to Appendix \ref{method:moment}.

	\section{Theoretical Results}\label{sec:theory}

For the interest of space, we only present the results for the background-specific predictive probability with $A=0$.
The results for the other two predictive probabilities are parallel and can be similarly developed.
	To facilitate the analysis, we begin by formally defining the population-level (expected) objective function:
	\bse
	\bL(\xi_0,b_1,\beta_{10},\varrho)=E\left(\log [\xi_0(\X)  b_1^{-1}\beta_{10}+\{1-\xi_0(\X)\}  (1-b_1)^{-1}(\varrho-\beta_{10})]\middle|R=0,A=0\right),
	\ese
	with its empirical version $\wh{\bL}(\xi_0,b_1,\beta_{10},\varrho)$.
	
	\begin{assumption}\label{ass1}
    Define $\f(\x) = \{ f_0(\x), f_1(\x) \}\trans$, where
		$f_0(\x)=\log\{\xi_0(\x)\}-\frac{1}{2}[\log\{\xi_0(\x)\}+\log\{1-\xi_0(\x)\}]$ and $f_1(\x)=\log\{1-\xi_0(\x)\}-\frac{1}{2}[\log\{\xi_0(\x)\}+\log\{1-\xi_0(\x)\}]$, and the corresponding estimate is $\{\wh{f}_k(\x)\}_{k=0}^1$.
		There exist a constant $c > 0$ and a sequence  $r_{n_{1\cdot 0}} \to 0$ such that, for almost every $\x$, we have
		\bse
		\pr\left( \| \widehat{\f}(\x) - \f(\x) \|_2 > t \right) \leq \exp\left\{-t^2/(c^2 r_{n_{1\cdot 0}}^2)\right\}, \quad \forall\ t > 0.
		\ese
	\end{assumption}
    \begin{remark}
      Note that the tail bound described in Assumption \ref{ass1} is intended to hold uniformly for every $n_{1\cdot 0}$ when estimating $\wh{f}_k$ for $k=0,1$. In other words, for each subsample size $n_{1\cdot 0}$, we have a $r_{n_{1\dot 0}}$ such that the corresponding estimators $\wh{f}_k$ for $k=0,1$ are required to satisfy the stated concentration inequality. This inequality is analogous to Hoeffding's inequality and provides a non-asymptotic concentration bound on the estimation error. Similar assumptions have also been adopted in recent work (e.g., \cite{maity2022understanding}, \cite{tsybakov2007fast}).
    \end{remark}

	
	\begin{theorem}\label{thm:conver}
		Suppose Assumption~\ref{ass1} holds. Define
		$\chi_n = r_{n_{1\cdot 0}} \sqrt{\log(n_{0\cdot 0})} + n_{1\cdot 0}^{-1/2} + n_{0\cdot 0}^{-1/2}$. 
		Then, there exists a constant $c_{10} > 0$ such that for any $\delta > 0$, with probability at least $1 - 6\delta$, we have
		\bse
		\| \widehat{\boldsymbol{\beta}} - \boldsymbol{\beta} \|_1 \leq c_{10} \chi_n \sqrt{\log(1/\delta)}.
		\ese
	\end{theorem}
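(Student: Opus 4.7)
This is a plug-in M-estimator convergence rate, and I will follow the classical score-decomposition route. Let $\psi(\X;\xi_0,b_1,\beta_{10},\varrho)$ denote the derivative of the integrand of $\bL$ with respect to $\beta_{10}$, and let $S(\cdot)=E\{\psi(\X;\cdot)\mid R=0,A=0\}$ and $\wh{S}(\cdot)$ denote its population and empirical (over the $n_{0\cdot 0}$ target samples with $A=0$) counterparts. The first-order conditions give $\wh{S}(\wh\xi_0,\wh{b}_1,\wh\beta_{10},\wh\varrho)=0$ and $S(\xi_0,b_1,\beta_{10},\varrho)=0$. Subtracting and inserting hybrid arguments yields the decomposition
\bse
0 &=& \underbrace{\wh{S}(\wh\xi_0,\wh{b}_1,\wh\beta_{10},\wh\varrho)-S(\wh\xi_0,\wh{b}_1,\wh\beta_{10},\wh\varrho)}_{(\text{I})\ \text{empirical process}}\\
&&+\ \underbrace{S(\wh\xi_0,\wh{b}_1,\wh\beta_{10},\wh\varrho)-S(\xi_0,b_1,\wh\beta_{10},\varrho)}_{(\text{II})\ \text{nuisance perturbation}}\\
&&+\ \underbrace{S(\xi_0,b_1,\wh\beta_{10},\varrho)}_{(\text{III})\ \text{parameter Taylor}}.
\ese

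Term (III) is handled by a second-order Taylor expansion about $\beta_{10}$, yielding $(\text{III})=H(\wh\beta_{10}-\beta_{10})+O\{(\wh\beta_{10}-\beta_{10})^2\}$, where $H=\partial_{\beta_{10}}S$ at the truth. The identifiability condition $p_{10}(\x)\neq p_{00}(\x)$ from Lemma~\ref{lem:iden} forces strict concavity of $\bL(\xi_0,b_1,\cdot,\varrho)$ at $\beta_{10}$, so $|H|$ is bounded below by a positive constant and can be inverted. Term (I) is the empirical-process error of the score at fixed (plug-in) nuisances. Since $\psi$ is bounded and Lipschitz under a mild positivity assumption on $\xi_0$ (keeping denominators away from zero), a Bernstein/Hoeffding bound over the $n_{0\cdot 0}$ i.i.d.\ samples gives $|(\text{I})|\lesssim n_{0\cdot 0}^{-1/2}\sqrt{\log(1/\delta)}$ on an event of probability $\ge 1-\delta$.

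For term (II), I will expand $S(\xi_0',b_1',\wh\beta_{10},\varrho')$ around $(\xi_0,b_1,\varrho)$, producing a linear main term in the three perturbations plus a quadratic remainder. The scalar nuisances contribute $|\wh b_1-b_1|\lesssim n_{1\cdot 0}^{-1/2}\sqrt{\log(1/\delta)}$ and $|\wh\varrho-\varrho|\lesssim n_{0\cdot 0}^{-1/2}\sqrt{\log(1/\delta)}$ by Hoeffding, giving precisely the $n_{1\cdot 0}^{-1/2}+n_{0\cdot 0}^{-1/2}$ pieces of $\chi_n$. The functional perturbation $\wh\xi_0-\xi_0$ enters $S$ through an average over the target design points $\{\X_i\}_{i\le n_{0\cdot 0}}$. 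Because $\psi$ is smooth in the log-odds parametrization $\f(\x)$ used in Assumption~\ref{ass1}, the pointwise sub-Gaussian tail there combined with a union bound over the $n_{0\cdot 0}$ design points yields $\sup_{i\le n_{0\cdot 0}}\|\wh{\f}(\X_i)-\f(\X_i)\|_2\lesssim r_{n_{1\cdot 0}}\sqrt{\log(n_{0\cdot 0}/\delta)}$, which is the origin of the $r_{n_{1\cdot 0}}\sqrt{\log n_{0\cdot 0}}$ term in $\chi_n$. Collecting (I)--(III), dividing by $|H|$, and absorbing the $O\{(\wh\beta_{10}-\beta_{10})^2\}$ remainder by a standard contraction step gives $|\wh\beta_{10}-\beta_{10}|\le c\,\chi_n\sqrt{\log(1/\delta)}$. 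The identity $\wh\beta_{00}=\wh\varrho-\wh\beta_{10}$ from constraint~\eqref{eq:constraint} then yields $|\wh\beta_{00}-\beta_{00}|\le|\wh\varrho-\varrho|+|\wh\beta_{10}-\beta_{10}|$, so the claimed $\ell_1$ bound follows after inflating $c$. The $1-6\delta$ confidence level arises from intersecting the six high-probability events used (concentration for (I), for $\wh b_1$, for $\wh\varrho$, for $\wh\f$ uniformly over the target points, plus control of the Taylor remainders in the nuisance and parameter directions).

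\emph{Main obstacle.} The principal technical difficulty is propagating the nonparametric error $\wh\xi_0-\xi_0$ through $S$ into a bound on $\wh\beta_{10}-\beta_{10}$ at the advertised rate. Three ingredients must mesh: (a) establishing smoothness of $\psi$ in the log-odds parametrization of Assumption~\ref{ass1}, so that the pointwise tail there applies; (b) the union-bound step over the $n_{0\cdot 0}$ target evaluations, which supplies---and also forces---the $\sqrt{\log n_{0\cdot 0}}$ inflation factor; and (c) checking that the second-order Taylor remainder in the nuisance direction is of lower order than $\chi_n\sqrt{\log(1/\delta)}$. Once these nonparametric pieces are in place, the Bernstein/Hoeffding bounds on (I) and on the scalar nuisances, plus the curvature lower bound from identifiability, are standard bookkeeping.
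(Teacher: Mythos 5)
Your proposal follows essentially the same route as the paper's proof: a first-order-condition decomposition with Taylor expansions in the functional nuisance $\wh\xi_0$ (controlled via Assumption~\ref{ass1} and a union bound over the $n_{0\cdot 0}$ target points), Hoeffding bounds for the scalar nuisances $\wh b_1$ and $\wh\varrho$, an empirical-process/chaining bound for the score, strong concavity to invert the curvature, and the constraint $\wh\beta_{00}=\wh\varrho-\wh\beta_{10}$ to pass to the $\ell_1$ bound. The only cosmetic difference is the grouping of terms (the paper keeps the empirical score at the true nuisances and isolates $Z_{n_{0\cdot 0}}$ uniformly over a neighborhood of $\beta_{10}$, whereas you evaluate the empirical-process term at the plug-in nuisances), and both arguments rely on the same preliminary consistency step to localize $\wh\beta_{10}$.
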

	
The proof of Theorem \ref{thm:conver} is provided in Appendix \ref{sec:supp:theory}.
Theorem~\ref{thm:conver} establishes the consistency of the estimator $\wh{\bb}$, provided that $r_{n_{1\cdot 0}} \sqrt{\log(n_{0\cdot 0})} \to 0$ as $n_{1\cdot 0}, n_{0\cdot 0}\to \infty$. 

Theorem \ref{thm:conver} establishes the upper bound of the parameter estimate. Beyond parameter estimation, an important question is how the resulting estimator performs in downstream prediction tasks. 

With any loss function $\ell(\cdot)$ and some function $h(\cdot)$,  for the background-specific prediction model with $A=0$, the conditional risk is
	\be\label{eq:relation0}
	E[\ell\{h(\X), Y\} \mid R = 0, A = 0]= E[\ell\{h(\X), Y\} \, w(Y) \mid R = 1, A = 0],
	\ee
	where, for simplicity, we write $w(y) = \frac{\pr(y \mid R = 0, A = 0)}{\pr(y \mid R = 1, A = 0)}$.
	One can derive that $w(1)=\frac{\beta_{10}(\alpha_{00}+\alpha_{10})}{\alpha_{10}(\beta_{00}+\beta_{10})}$ and $w(0)=\frac{\beta_{00}(\alpha_{00}+\alpha_{10})}{\alpha_{00}(\beta_{00}+\beta_{10})}$.
	To evaluate the performance of the predictive probability, it can be approximated as
	$\wh{E}[\ell\{h(\X), Y\} \, \wh{w}(Y) \mid R = 1, A = 0]$.
	Furthermore, the model can be fine-tuned specifically for the target subgroup by minimizing the reweighted empirical risk:
	\be\label{REM}
	\wh{h}_{\wh{w}} \in \arg\min_{h \in \mF} \wh{E}[\ell\{h(\X), Y\} \, \wh{w}(Y)\mid R = 1, A = 0],
	\ee
	where $\mF$ is a suitable function class. 
	For the other two predictive probabilities, the relations analogous to (\ref{eq:relation0}) can be derived similarly.
	For the background-specific predictive probability  with $A=1$, one can derive  $E[\ell\{h(\X),Y\}|R=0,A=1]$ as
	\bse
	E[\ell\{h(\X),Y=1\}|R=0,A=1] 
	-E\left(\left[\ell\{h(\X),Y=1\}-\ell\{h(\X),Y=0\}\right]|Y=0,A=1\right)\frac{\beta_{01}}{\beta_{01}+\beta_{11}}.
	\ese
	For the overall predictive probability, the conditional risk $E[\ell\{h(\X),Y\}|R=0]$ is
	\bse
	&& E[\ell\{h(\X), Y\} \, w(Y) \mid R = 1, A = 0](\beta_{10} +\beta_{00})
	+E[\ell\{h(\X),Y=1\}|R=0,A=1](\beta_{01}+\beta_{11})\\
	&&-E\left(\left[\ell\{h(\X),Y=1\}-\ell\{h(\X),Y=0\}\right]|Y=0,A=1\right)\beta_{01}.
	\ese

    The above relations show that the target-domain risk can be expressed in terms of appropriately reweighted risks computed from the observed source data. This motivates the use of weighted empirical risk minimization for learning classifiers tailored to the target domain. 
    
    We now establish a generalization bound for the fitted model~(\ref{REM}), which is obtained via weighted empirical risk minimization over the source subgroup. Let $\mF$ denote the hypothesis class of classifiers. For any $h \in \mF$ and a weight function $w(y): y \rightarrow \R$, we define the population-level weighted loss and its empirical counterpart based on the source subgroup data as follows:
	\bse
	\mL_1(h,w)&=& E\left[\ell\{h(\X),Y\}w(Y)\big|A=0,R=1\right],\\
	\wh\mL_1(h,w)&=&\wh{E}\left[\ell\{h(\X),Y\}w(Y)\big|A=0,R=1\right].
	\ese
	We also define the population loss on the target subgroup as: $\mL_0(h)= E\left[\ell\{h(\X),Y\}\big|R=0,A=0\right]$. Clearly, $\mL_1(h,w)=\mL_0(h)$.

	To establish our generalization bound, we utilize the concept of Rademacher complexity \citep{bartlett2002rademacher}, denoted as $\mR_n(\mG)$ (see Appendix~\ref{sec:supp:theory} for details), and impose the following assumption on the loss function:
	\begin{assumption}\label{ass2}
		The loss function $\ell$ is uniformly bounded; that is, there exists a constant $B>0$ such that
		\bse
		|\ell\{h(\x),y\}|\leq B\ \text{for any}\ h\in \mF, \x\in \mX\subset \R^q, \text{and}\ y\in \{0,1\}.
		\ese
	\end{assumption}
	We now present the generalization bound for the learned model, with its proof provided in Appendix~\ref{sec:supp:theory}.
	\begin{proposition}\label{pre}
		Under Assumptions \ref{ass1} and \ref{ass2}, let $\wh h_{\wh w}=\argmin_{h\in \mF}\wh\mL_1(h,\wh{w})$ be the classifier obtained by minimizing the reweighted empirical risk on the source subgroup. Then, there exist constants $c,d>0$ such that, with probability at least $1-7\delta$, the following generalization bound holds:
		\bse
		\mL_0(\wh h_{\wh w})-\min_{h\in\mF}\mL_0(h)\leq 2 \mR_{n_{1\cdot 0}}(\mG)+d B\|\wh\bb- \bb\|_1+c\left\{\sqrt{\frac{\log(1/\delta)}{n_{1\cdot 0}}}+\sqrt{\frac{\log(1/\delta)}{n_{0}}}\right\},
		\ese
		where $\mG=\{w(y)\ell\{h(\x),y\}:h\in\mF\}$, and $\mR_{n_{1\cdot 0}}(\mG)$ denotes its Rademacher complexity as defined in Appendix~\ref{sec:supp:theory}.
	\end{proposition}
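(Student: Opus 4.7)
\textbf{Proof plan for Proposition \ref{pre}.} The strategy is to use the identity $\L_1(h,w)=\L_0(h)$ from \eqref{eq:relation0} to recast the target-domain excess risk as an excess risk on the source $(A=0)$ subgroup under the \emph{population} weights $w$, and then split the gap into a uniform-deviation piece (controlled by the Rademacher complexity of $\mG$) plus a weight-estimation piece (controlled by $\|\wh\bb-\bb\|_1$ together with auxiliary sampling errors in $\wh\ba$ and $\wh\varrho$).

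Let $h^{\ast}\in\arg\min_{h\in\mF}\L_0(h)$. Using $\L_0(\cdot)=\L_1(\cdot,w)$ and inserting $\wh\L_1(\wh h_{\wh w},w)$, $\wh\L_1(\wh h_{\wh w},\wh w)$, $\wh\L_1(h^{\ast},\wh w)$, and $\wh\L_1(h^{\ast},w)$, I would write the telescoping decomposition
\begin{align*}
\L_0(\wh h_{\wh w})-\L_0(h^{\ast})
&= [\L_1(\wh h_{\wh w},w)-\wh\L_1(\wh h_{\wh w},w)]+[\wh\L_1(\wh h_{\wh w},w)-\wh\L_1(\wh h_{\wh w},\wh w)]\\
&\quad +[\wh\L_1(\wh h_{\wh w},\wh w)-\wh\L_1(h^{\ast},\wh w)]+[\wh\L_1(h^{\ast},\wh w)-\wh\L_1(h^{\ast},w)]\\
&\quad +[\wh\L_1(h^{\ast},w)-\L_1(h^{\ast},w)].
\end{align*}
The third bracket is non-positive by the ERM definition of $\wh h_{\wh w}$. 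The first and fifth brackets are bounded in absolute value by $\sup_{h\in\mF}|\wh\L_1(h,w)-\L_1(h,w)|$; since Assumption~\ref{ass2} makes $\mG$ uniformly bounded, McDiarmid's bounded-differences inequality followed by the standard symmetrization argument yields
\begin{equation*}
\sup_{h\in\mF}|\wh\L_1(h,w)-\L_1(h,w)|\leq 2\mR_{n_{1\cdot 0}}(\mG)+c_1 B\sqrt{\log(1/\delta)/n_{1\cdot 0}}
\end{equation*}
on an event of probability at least $1-\delta$, accounting for the Rademacher term in the bound.

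For the second and fourth (weight-error) brackets, I would bound $|\wh\L_1(h,w)-\wh\L_1(h,\wh w)|\leq B\max_{y\in\{0,1\}}|\wh w(y)-w(y)|$ by Assumption~\ref{ass2}. From the closed forms $w(1)=\beta_{10}(\alpha_{00}+\alpha_{10})/[\alpha_{10}\varrho]$ and $w(0)=\beta_{00}(\alpha_{00}+\alpha_{10})/[\alpha_{00}\varrho]$ with $\varrho=\beta_{10}+\beta_{00}$, a first-order Taylor expansion of $w$ at the truth (valid because positivity of the true proportions bounds the denominators away from zero) gives $\max_y|\wh w(y)-w(y)|\leq C(\|\wh\bb-\bb\|_1+\|\wh\ba-\ba\|_1+|\wh\varrho-\varrho|)$. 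Hoeffding applied to the multinomial frequencies in \eqref{est:alpha} yields $\|\wh\ba-\ba\|_1\leq c_2\sqrt{\log(1/\delta)/n_1}$ and, applied to $\wh\varrho$, yields $|\wh\varrho-\varrho|\leq c_3\sqrt{\log(1/\delta)/n_0}$, each on an event of probability at least $1-\delta$. Because $n_{1\cdot 0}\leq n_1$, the $\ba$ contribution is absorbed into the $\sqrt{\log(1/\delta)/n_{1\cdot 0}}$ rate, the $\varrho$ contribution produces the $\sqrt{\log(1/\delta)/n_0}$ rate, and $\|\wh\bb-\bb\|_1$ appears explicitly as the $dB\|\wh\bb-\bb\|_1$ term. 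A union bound across the symmetrization event, the Hoeffding events for the three nonzero components of $\wh\ba$ and for $\wh\varrho$, together with a two-sided McDiarmid deviation, yields the stated $1-7\delta$ probability.

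The main obstacle I anticipate is the Lipschitz expansion of $w$: on the same high-probability event, one must argue that $\wh\ba,\wh\bb,\wh\varrho$ remain in a neighborhood of the truth on which both $(\alpha_{10},\alpha_{00},\varrho)$ and their plug-in estimates are bounded away from zero, so that the Taylor remainder is uniformly controlled and the constant $d$ depends only on the population-level positivity margins rather than on random fluctuations. Under the positivity conditions already implicit in the identifiability discussion of Lemma~\ref{lem:iden}, this is routine; the remainder of the argument is triangle-inequality bookkeeping.
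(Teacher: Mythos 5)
Your proposal follows essentially the same route as the paper's proof: the same five-term telescoping decomposition around the ERM inequality, the same McDiarmid-plus-symmetrization control of the uniform deviation via $\mR_{n_{1\cdot 0}}(\mG)$, and the same reduction of the weight-error brackets to $|\wh w(y)-w(y)|$, which the paper's Lemma~\ref{con} bounds by $\|\wh\bb-\bb\|_1$ plus Hoeffding-rate errors in $\wh\ba$ and the marginal probabilities of $A=0$ --- exactly the Lipschitz/positivity argument you sketch. The one bookkeeping difference: you bound both the first and fifth brackets by the two-sided supremum deviation, which after symmetrization would give $4\mR_{n_{1\cdot 0}}(\mG)$ rather than the stated $2\mR_{n_{1\cdot 0}}(\mG)$; the paper keeps the constant $2$ by applying the uniform bound only to the bracket involving the random $\wh h_{\wh w}$ and handling the fixed-$h^{\ast}$ bracket with a plain Hoeffding bound.
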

		\begin{remark}
		Proposition~\ref{pre} indicates that the generalization bound depends on the estimation error $\|\wh\bb-\bb\|_1$, which can be directly controlled based on the conditions listed in Assumption \ref{ass1}, implying that different estimation procedures for $\bb$ will yield different upper bounds. In Theorem~\ref{thm:conver}, we established an upper bound for the estimation error of $\wh{\bb}$, which directly leads to a refined generalization bound for the learned classifier $\wh{h}_{\wh{w}}$. Specifically, for any $\delta>0$, with probability at least $1-13\delta$, the following inequality holds:
		\bse
		\mL_0(\wh h_{\wh w})-\min_{h\in\mF}\mL_0(h)\leq 2 \mR_{n_{1\cdot 0}}(\mG)+d Bc_{10}\chi_n\sqrt{\log(1/\delta)}+c\left\{\sqrt{\frac{\log(1/\delta)}{n_{1\cdot 0}}}+\sqrt{\frac{\log(1/\delta)}{n_{0}}}\right\},
		\ese
		where $c_{10}$ is the constant appearing in Theorem~\ref{thm:conver}, and $\alpha_n$ characterizes the convergence rate of $\wh{\bb}$.
	\end{remark}

\section{Synthetic Data Results}\label{add:synthetic}
	
We consider a structured data-generating process in which the covariate $\X \in \R^4$ is drawn from a distribution conditioned on a pair $(Y, A)$, where $Y \in \{0, 1\}$ denotes the class label and $A \in \{0, 1\}$ denotes the background. The generation begins by sampling $(Y, A)$ according to a predefined distribution.
	
In the \emph{source} domain, we consider $(Y, A) \in \{(0,0), (0,1), (1,0)\}$, each occurring with probability $1/3$. The covariate $\X \in \R^4$ is generated as $\X \sim N(\bmu_{\rm{YA}}, \I_4)$, where $\bmu_{\rm{YA}}$ denotes the mean vector for each combination and $\I_4$ is the $4 \times 4$ identity matrix. The stratum $(1,1)$ is excluded from the source. In the \emph{target} domain, all four combinations $(Y, A) \in \{0,1\}^2$ appear with equal probability $1/4$, and $\X$ is drawn from the same distribution $N(\bmu_{\rm{YA}}, \I_4)$ with distinct means:
\bse
\bmu_{00} = (1, 0, 0, 0)\trans, \quad \bmu_{01} = (0, 0, 1, 0)\trans, \quad \bmu_{10} = (0, 1, 0, 0)\trans, \quad \bmu_{11} = (0, 0, 0, 1)\trans.
\ese
	
We follow Algorithm~\ref{alg} to compute the conditional probabilities required by the proposed approach as well as the two naive benchmarks. 
Specifically, we calculate the five key conditional probabilities needed for implementation: $\xi_0(\x)$, $\xi(\x)$, $\tau_0(\x)$, $\tau_1(\x)$, and $\kappa(\x)$, which, together with the estimators of the parameters $\{\beta_{ya}:y=0,1;a=0,1\}$ and $\{\alpha_{ya}:y=0,1;a=0,1\}$, determine the predictive probabilities $\eta_0(\x)$, $\eta_1(\x)$ and $\eta(\x)$ for the proposed method.
In addition, we compute the naive benchmark $\gamma(\x)$ following (\ref{eq:naive2}), 
which corresponds to the standard UDA method relying solely on the label shift assumption without accounting for the subgroup information. 

To assess the performance of the proposed approach as well as to compare with the two naive benchmarks, we conduct 100 simulations for each configuration and summarize the results using boxplots that compare ${\wh\eta(\x), \wh\xi(\x), \wh\gamma(\x)}$ across varying sample sizes. The left panel of Figure~\ref{fig:simulation_combined} displays the performance of $\wh\eta(\x), \wh\xi(\x)$ and  $\wh\gamma(\x)$ for $n_0 = 1000$ and $6000$, with $n_1$ ranging from $1000$ to $8000$, while the right panel shows the corresponding results for $n_1 = 1000$ and $6000$, with $n_0$ ranging from $1000$ to $8000$. Performance is evaluated using two standard metrics: accuracy and F$_1$ score. In both metrics, the proposed approach consistently outperforms the benchmark estimators. 
Notably, the accuracy of $\wh\eta(\x)$ steadily improves with larger $n_0$ (or $n_1$), further demonstrating the robustness and effectiveness of the proposed method.

\begin{figure}[htbp]
    \centering
    \begin{subfigure}[b]{0.48\textwidth}
        \centering
        \includegraphics[width=\textwidth]{./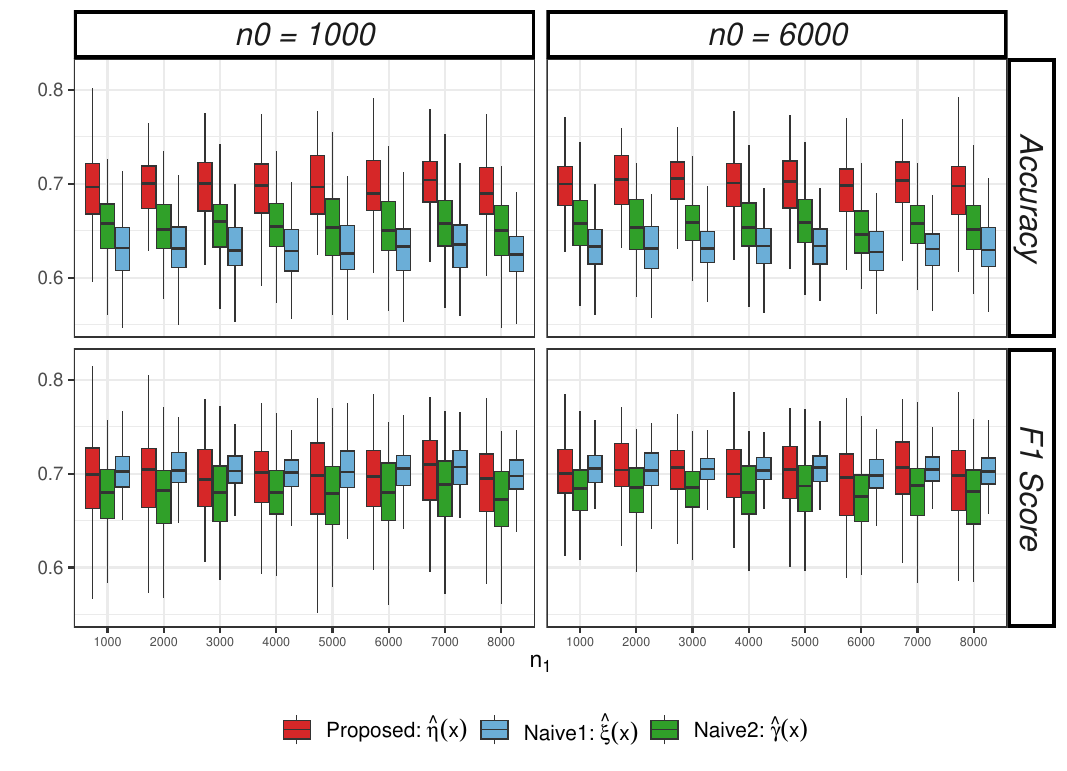}
    \end{subfigure}
    \hfill
    \begin{subfigure}[b]{0.48\textwidth}
        \centering
        \includegraphics[width=\textwidth]{./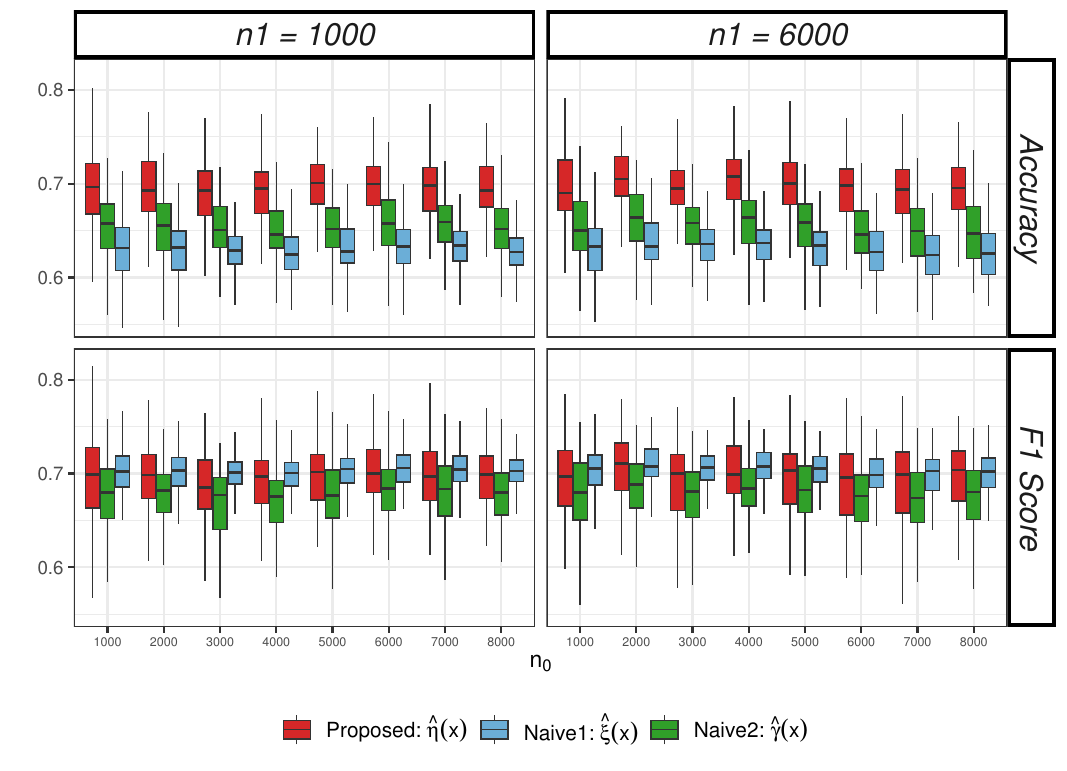}
    \end{subfigure}
\caption{Left panel: Accuracy and F$_1$ score of $\wh\eta(\x)$, $\wh\xi(\x)$, and $\wh\gamma(\x)$ with fixed $n_0 = 1000/6000$ but $n_1$ varies. 
Right panel: Accuracy and F$_1$ score of $\wh\eta(\x)$, $\wh\xi(\x)$, and $\wh\gamma(\x)$ with fixed $n_1 = 1000/6000$ but $n_0$ varies.}    \label{fig:simulation_combined}
\end{figure}

	\section{Experiments}\label{numeric}
	
In this section, we analyze the \textbf{Waterbirds} dataset \citep{sagawa2019distributionally}, which consists of 11,788 images. 
This public dataset is widely used to investigate spurious correlations in image classification. 
It is also well aligned with our problem setting of unsupervised domain adaptation under \emph{structured missingness}, where specific combinations of labels and backgrounds are systematically absent in the labeled source domain, while labels are entirely unobserved in the target domain.
    The label $Y=1$ denotes a waterbird and $Y=0$ landbird.
    The background $A=1$ corresponds to a water background and $A=0$ a land background. It yields four label–background subpopulations, as summarized in Table~\ref{tab:waterbirds_counts}.

    	\begin{table}[htbp]
		\centering
		\caption{Empirical joint distribution of $(Y,A)$ in the Waterbirds dataset, with varied values of $a$, $b$ and $c$, $0<a,b,c<1$.}
		\label{tab:waterbirds_counts}
		\resizebox{\textwidth}{!}{
			\begin{tabular}{cccccccc}
				\toprule
				$Y$ & $A$ & Description & Count & Total Proportion & Proportion in Source & Proportion in Target \\
				\midrule
                    1 & 1 & Waterbird on water & 1832 & 0.155 & 0       &    0.155\\
                    0 & 1 & Landbird on water & 2905  & 0.246 & $0.246a$ & $0.246(1-a)$\\
                    1 & 0 & Waterbird on land & 831  & 0.071 & $0.071b$ & $0.071(1-b)$\\
                    0 & 0 & Landbird on land  & 6220 & 0.528 & $0.528c$ & $0.528(1-c)$\\\bottomrule
		\end{tabular}}
	\end{table}
	
	To construct a structured domain adaptation problem, we partition the full dataset into a \emph{source domain} ($R=1$) and a \emph{target domain} ($R=0$). Specifically, we allocate samples from three subgroups, $(Y=0,A=1)$, $(Y=1,A=0)$ and $(Y=0,A=0)$, into the source domain, with allocation rates denoted by parameters $a$, $b$, and $c$, respectively. The remaining subgroup, $(Y=1,A=1)$, is deliberately \emph{excluded} from the source domain and appears only in the target domain. This setting reflects real-world scenarios in which a specific combination of label and background is structurally missing from labeled datasets due to systematic data collection biases or constraints. In the target domain, all four subgroups are retained, but the label variable $Y$ is treated as unobserved. 

    To implement the proposed method, we apply the distribution matching approach to estimate the subclass proportions in the target domain. 
	For feature extraction, we embed each image into a 512-dimensional feature vector using a ResNet-18 model \citep{he2016deep} and a ViT-16 model \citep{heo2021rethinking}, both pre-trained on ImageNet \citep{deng2009imagenet}, without additional fine-tuning. These embeddings serve as covariate $\X\in \R^{512}$  in our downstream analysis. Based on these feature vectors, we fit logistic regression models with $L_2$-regularization to estimate five key conditional probabilities required by both our proposed method and benchmark procedures: $\xi_0(\x)$, $\xi(\x)$, $\tau_0(\x)$, $\tau_1(\x)$ and $\kappa (\x)$.

	\begin{figure}[htbp]
		\centering
		\includegraphics[width=\textwidth]{./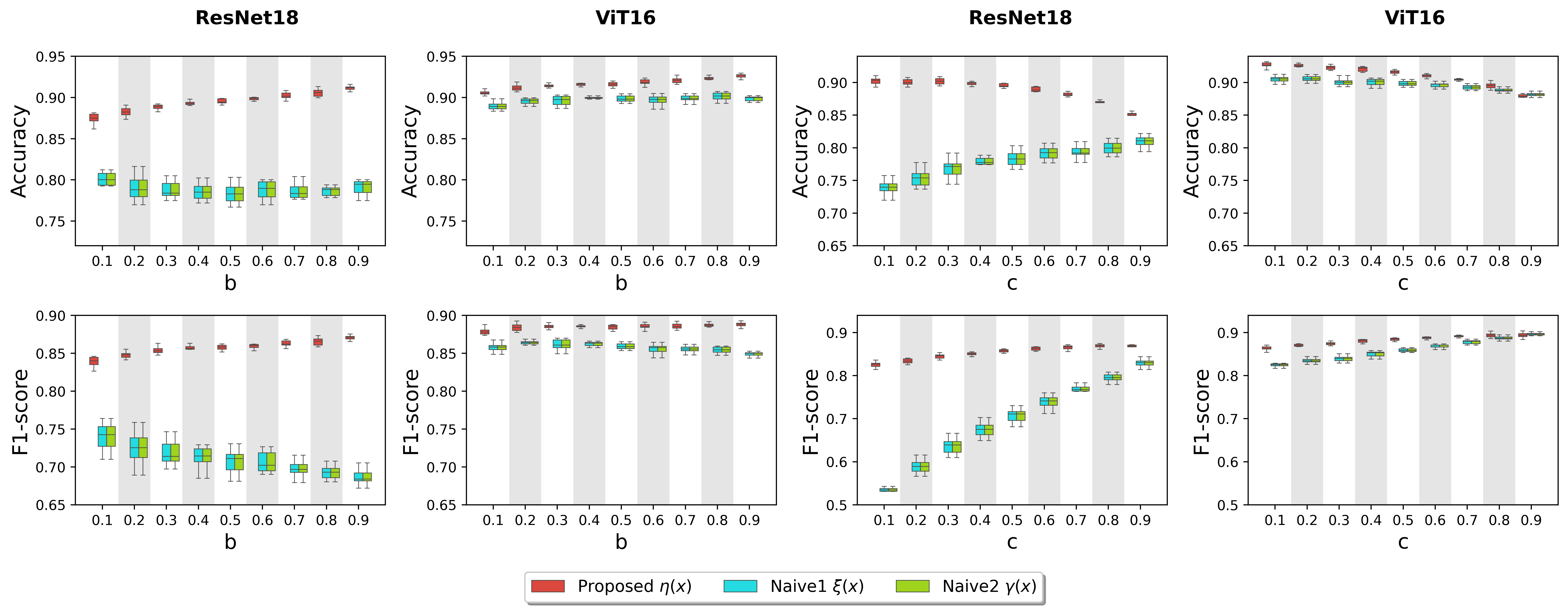}  
		\caption{Performance comparison of our proposed estimator $\eta(\x)$, and the naive methods $\xi(\x)$ and $\gamma(\x)$ under the setting \underline{$a = 0.7$} with either $c=0.5$ and varying $b$ or $b=0.5$ and varying $c$.}
		\label{fig:plot07}
	\end{figure}
	
		\begin{figure}[htbp]
		\centering
		\includegraphics[width=\textwidth]{./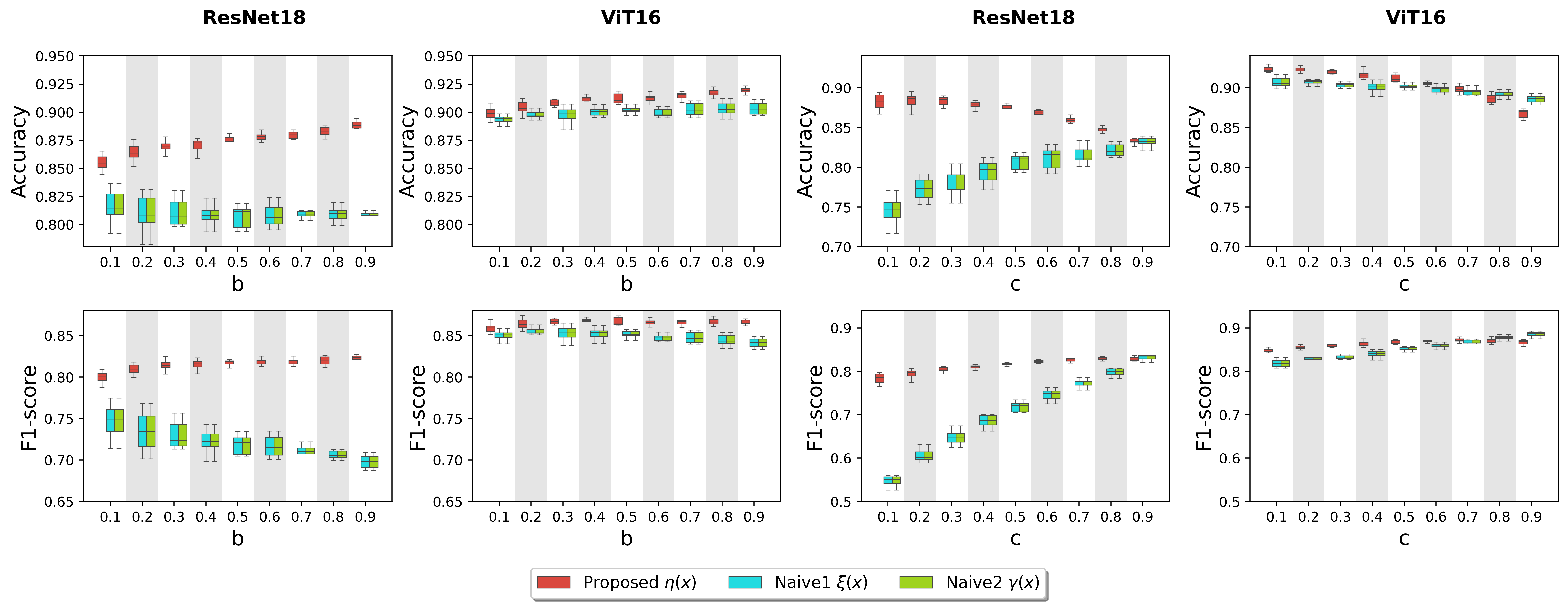} 
		\caption{Performance comparison of our proposed estimator $\eta(\x)$, and the naive methods $\xi(\x)$ and $\gamma(\x)$ under the setting \underline{$a = 0.5$} with either $c=0.5$ and varying $b$ or $b=0.5$ and varying $c$.}
		\label{fig:plot05}
	\end{figure}
	
		\begin{figure}[htbp]
		\centering
		\includegraphics[width=\textwidth]{./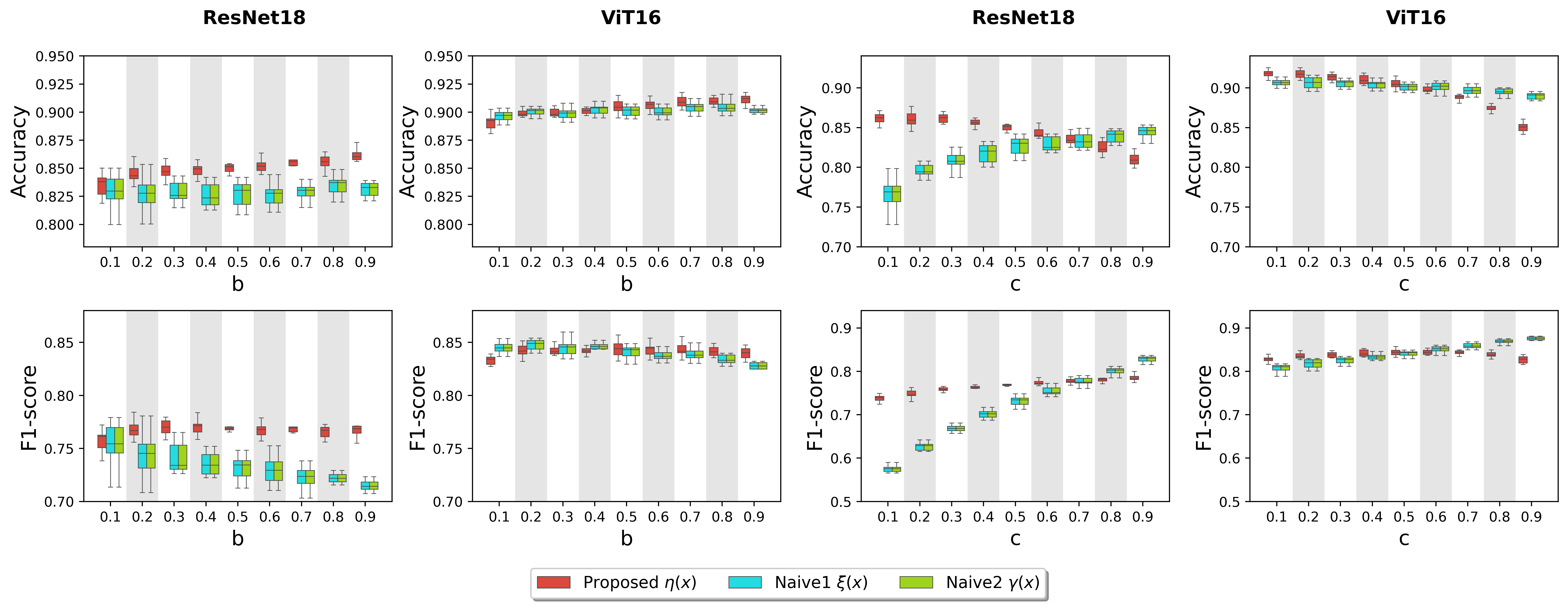}  
		\caption{Performance comparison of our proposed estimator $\eta(\x)$, and the naive methods $\xi(\x)$ and $\gamma(\x)$ under the setting \underline{$a = 0.3$} with either $c=0.5$ and varying $b$ or $b=0.5$ and varying $c$.}
		\label{fig:plot03}
	\end{figure}

	For empirical evaluation, we fix the subclass sampling rate at $a = 0.7$ in the source domain and systematically vary the remaining subclass inclusion rates by setting either $b = 0.5$ with $c \in \{0.1, 0.2, \ldots, 0.9\}$, or $c = 0.5$ with $b \in \{0.1, 0.2, \ldots, 0.9\}$. For each configuration, the data generation process is repeated 50 times to account for sampling variability. We assess performance using two widely adopted classification metrics: accuracy and F$_1$ score.
	Figure~\ref{fig:plot07} presents boxplots summarizing the distributions of the metrics across repeated experiments for our proposed estimator $\wh\eta(\x)$ and the two naive estimators $\wh\xi(\x)$ and $\wh\gamma(\x)$. 
	The left panel shows the results based on features extracted by ResNet-18, while the right panel shows the results based on features extracted by ViT-16.
	We also reduce the proportion $a$ from $0.7$ to $0.5$ and $0.3$, and similarly demonstrate the results in Figure~\ref{fig:plot05} and Figure~\ref{fig:plot03}, respectively.
	
	The observations and conclusions from this experiment can be summarized as follows.
	First, the proposed \emph{correct} predictive probability $\eta(\x)$ consistently achieves the best performance across most scenarios. This is particularly evident when the subpopulation $(Y=0,A=1)$ constitutes a larger proportion $(a=0.7)$ in the source domain (Figure~\ref{fig:plot07}). 
	This behavior is logical; because $(Y=0,A=1)$ is the only class containing information about $A=1$, a higher representation of this subgroup helps compensate for the absence of the $(Y=1,A=1)$ subgroup.
	Second, the performance of the proposed $\eta(\x)$ can degrade, even falling behind the two \emph{incorrect} naive benchmarks, if the proportion of either $(Y=1,A=0)$ or $(Y=0,A=0)$ becomes exceedingly small either in the source domain or in the target. 
	In such cases, the source domain may effectively suffer from multiple missing subgroups (beyond just $(Y=1,A=1)$), elevating the problem's difficulty well beyond the scope of this paper.
	Finally, choosing ViT-16 generally brings much better performance than using ResNet-18, especially for the two naive benchmarks.
	
\section{Discussions and Conclusions}

In this paper, we introduce a novel unsupervised domain adaptation setting where an entire label-background subpopulation is absent from the source domain, a scenario motivated by real-world data collection constraints. 
Despite this structured missingness, we show that accurate prediction in the target domain is still achievable. 
We develop a theoretical framework that enables such prediction by estimating subpopulation proportions in the target through distribution matching. 
We provide rigorous guarantees, including statistical consistency as well as upper bounds on the target-domain prediction error. Empirically, our method outperforms standard baselines that overlook structured missingness, especially in prediction performance for the unobserved subpopulation. 
Overall, our framework provides a rigorous characterization of model adaptation under subpopulation structured missingness, and enables robust domain adaptation in such a challenging scenario.

Our theoretical framework is built upon structured conditional invariance and mixture proportion estimation. These tools naturally generalize to multi-class labels for $n_y$ species and multi-level (or even continuous) environment variables for $n_a$ species. In fact, the identification strategy and distribution-matching estimation carry over to larger joint label-environment spaces, though at the cost of heavier notation and more complex optimization. Technically, at this general multi-label and multi-background situation, the model identification considerations (see discussion in Section \ref{sec:model}) becomes more complex. At this situation, one can identify both $\pr(\X, A=a|R=0)$ as well as $\pr(A=a|R=0)$, which in total $2n_a-1$ quantities, while one has in total $n_yn_a$ unknown quantities, including $\pr(Y=y, A=a|R=0)$ and the unobservable subpopulation distribution $\pr(\X|Y=1,A=1)$. To make sure this model is identifiable, one needs to make $(n_y-2)n_a+1$ anchor set assumptions. For example, when $n_y=3$ and $n_a=2$, 3 anchor set assumptions are needed. Interestingly, as long as the label is binary $n_y=2$, one anchor set assumption is sufficient if only one subpopulation is missing in the source. In the setting we consider in the paper, $n_y=n_a=2$, so we only need to make one anchor set assumption.

The invariance assumption imposed in (\ref{eq:invariance}) can be viewed as a conditional, or more nuanced, extension of the standard label shift assumption. 
While standard label shift assumes that the distribution of $\X$ has no change across domains conditional on the label $Y$, equation (\ref{eq:invariance}) postulates that its distribution remains the same conditional on both the label $Y$ and the environment $A$. As discussed in Section~\ref{sec:prelim}, this assumption is justifiable in a variety of practical contexts.
In practice, however, the environment $A$ itself may undergo shifts. 
For instance, the source domain might contain only two backgrounds (e.g., water and land), whereas the target domain introduces a third (e.g., sky). 
This scenario effectively transitions into an out-of-distribution (OOD) generalization problem, where the shift occurs with respect to the environment $A$ rather than the label $Y$. 
Addressing a missing subgroup in the source domain within this OOD context extends significantly beyond the scope of this paper, as the model identification challenges discussed in Section~\ref{sec:model} become substantially more intractable. 
Consequently, this problem remains an important area for future work.

\subsubsection*{Broader Impact Statement}
This work addresses a practical limitation of unsupervised domain adaptation by enabling reliable prediction when an entire subpopulation is absent from the source data, a situation that commonly arises in healthcare, ecological monitoring, and other domains where certain subgroups are systematically underrepresented due to data collection constraints. 
By explicitly modeling structured missingness, our approach can help reduce biased predictions on groups that conventional methods would otherwise misclassify, contributing to fairer and more robust machine learning systems.

\subsubsection*{Acknowledgments}
Sharon Li is supported in part by the AFOSR Young Investigator Program under award number FA9550-23-1-0184, National Science Foundation (NSF) under awards IIS-2237037 and IIS-2331669, Office of Naval Research under grant number N00014-23-1-2643, Schmidt Sciences Foundation, Open
Philanthropy, Alfred P. Sloan Fellowship, and gifts from Google and Amazon.
Jiwei Zhao is supported in part by NSF under awards DMS-1953526, DMS-2122074 and DMS-2310942, and National Institutes of Health under award R01DC021431.
Any opinions, findings, and conclusions or recommendations expressed in this paper are those of the author(s) and do not necessarily reflect the views of the funding agencies.

\bibliography{main}
\bibliographystyle{tmlr}

\newpage
\appendix
\section{Appendix}
	\subsection{Proofs and More Details in Section~\ref{sec:method}}\label{sec:supp:method}
	\begin{proof}[Proof of Proposition~\ref{pro:relation}]
		For $A=1$ case, note that
		\bse
		&& p(\x\mid R=0,A=1)\pr(R=0,A=1) \\
		&=& p(\x\mid R=0,Y=1,A=1)p_{011} + p(\x\mid R=0,Y=0,A=1)p_{001}.
		\ese
		Thus,
		\bse
		p_{11}(\x) = \frac{p(\x\mid R=0,A=1)\pr(R=0,A=1)-p_{01}(\x)p_{001}}{p_{011}}.
		\ese
		Then,
		\bse
		&& \pr(Y=1\mid \x,R=0,A=1) = \frac{p_{11}(\x)p_{011}}{
			p(\x,R=0,A=1)
		}\\
		&=& \frac{p(\x\mid R=0,A=1)\pr(R=0,A=1)-p_{01}(\x)p_{001}}{p(\x\mid R=0,A=1)\pr(R=0,A=1)} \\
		&=& \frac{p(\x\mid R=0,A=1)p_{0\cdot1}-p_{01}(\x)\beta_{01}(1-\pi)}{p(\x\mid R=0,A=1)p_{0\cdot1}}
		\ese
		Note that
		\bse
		\pr(R=1\mid \x,A=1)=\frac{p_{01}(\x)\alpha_{01}\pi}{p_{01}(\x)\alpha_{01}\pi+p(\x\mid R=0,A=1)p_{0\cdot1}}
		\ese
		gives
		\bse
		\frac{p_{01}(\x)}{
			p(\x\mid R=0,A=1) p_{0\cdot1}}
		=\frac{\pr(R=1\mid \x,A=1)}{\alpha_{01}\pi\{1-\pr(R=1\mid \x,A=1)\}}.
		\ese
		Hence,
		\bse
		\pr(Y=1\mid \x,R=0,A=1) = 1-\frac{\beta_{01}(1-\pi)}{\alpha_{01}\pi}\frac{\pr(R=1\mid \x,A=1)}{1-\pr(R=1\mid \x,A=1)}.
		\ese
		Note that
		\bse
		\pr(R=1\mid \x,A=1)=\frac{p(\x\mid R=1,A=1)\alpha_{01}\pi}{p(\x\mid R=1,A=1)\alpha_{01}\pi+p(\x\mid R=0,A=1)p_{0\cdot1}}
		\ese
		gives
		\bse
		\frac{p(\x\mid R=1,A=1)}{
			p(\x\mid R=0,A=1) p_{0\cdot1}}
		=\frac{\pr(R=1\mid \x,A=1)}{\alpha_{01}\pi\{1-\pr(R=1\mid \x,A=1)\}}.
		\ese
		Hence,
		\bse
		\pr(Y=1\mid \x,R=0,A=1) = 1-\frac{\beta_{01}(1-\pi)}{\alpha_{01}\pi}\frac{\pr(R=1\mid \x,A=1)}{1-\pr(R=1\mid \x,A=1)}\left\{\frac{p(\x|Y=0,A=1)}{p(\x|R=1,A=1)}\right\}.
		\ese

    For $A=0$ case, note that
    		\bse
		&&\pr(Y=1\mid \x,R=0,A=0) = \frac{\pr(Y=1,\x,R=0,A=0)}{\pr(Y=1,\x,R=0,A=0)+\pr(Y=0,\x,R=0,A=0)}\\
&=&\frac{\pr(\x, Y=1,R=1,A=0)\frac{\pr(Y=1,R=0,A=0)}{\pr(Y=1,R=1,A=0)}}{\pr(\x, Y=1,R=1,A=0)\frac{\pr(Y=1,R=0,A=0)}{\pr(Y=1,R=1,A=0)}+\pr(\x, Y=0,R=1,A=0)\frac{\pr(Y=0,R=0,A=0)}{\pr(Y=0,R=1,A=0)}}\\
&=&\frac{\frac{\beta_{10}}{\alpha_{10}} \xi_0(\x)}{\frac{\beta_{10}}{\alpha_{10}} \xi_0(\x)+\frac{\beta_{00}}{\alpha_{00}} \{1 - \xi_0(\x)\}}.
		\ese
By Bayes' rule, we obtain the following equation
\bse
\eta(\x) = \eta_1(\x) \tau_0(\x) + \eta_0(\x) \{1 - \tau_0(\x)\}.
\ese
	\end{proof}

	\begin{proof}[Proof of Lemma~\ref{lem:iden}]
		It is easy to see that, $\pi$, $\alpha_{10}$, $\alpha_{01}$, $p_{10}(\x)$, $p_{01}(\x)$ and $p_{00}(\x)$ are all identifiable.
		Now suppose that there are two different sets $p_{11}(\x)$, $\beta_{10}$, $\beta_{00}$ and $\wt p_{11}(\x)$, $\wt \beta_{10}$, $\wt \beta_{00}$ such that
		\be
		\beta_{11}p_{11}(\x)+(1-\beta_{11}-\beta_{10}-\beta_{00})p_{01}(\x) &=&
		\beta_{11}\wt p_{11}(\x)+(1-\beta_{11}-\wt \beta_{10}-\wt \beta_{00})p_{01}(\x), \n\\
		\beta_{10}p_{10}(\x)+\beta_{00}p_{00}(\x) &=& \wt \beta_{10}p_{10}(\x)+\wt \beta_{00}p_{00}(\x).\label{eq:2}
		\ee
		Now taking the integral with respect to $\x$ on both sides of the second equation above, it is clear that
		\bse
		\beta_{10}+\beta_{00}=\wt \beta_{10}+\wt \beta_{00}.
		\ese
		Plugging in back to the first equation above, we obtain
		\bse
		\beta_{11} \{p_{11}(\x)-\wt p_{11}(\x)\} = 0.
		\ese
		Since $\beta_{11}>0$, we obtain $p_{11}(\x)=\wt p_{11}(\x)$.
		Finally, (\ref{eq:2}) leads to
		$(\beta_{10}-\wt\beta_{10})p_{10}(\x)=
		(\wt \beta_{00}-\beta_{00})p_{00}(\x)$, which can only hold if
		$\beta_{10}=\wt\beta_{10}$ and $\wt \beta_{00}=\beta_{00}$ since
		$p_{10}(\x)\ne p_{00}(\x)$.
		This completes the proof.
	\end{proof}

	\begin{proof}[Proof of Lemma~\ref{lem:max}]
	\bse
	&&D\left\{p(\x|R=0,A=0)\middle\|\sum_{k=0}^1 p(\x|Y=k,A=0)\beta_{k0}\frac{1-\pi}{\pr(R=0,A=0)}\right\}\\
	&=&\int p(\x|R=0,A=0)\log \frac{p(\x|R=0,A=0)}{\sum_{k=0}^1 p(\x|Y=k,A=0)\beta_{k0}\frac{1-\pi}{\pr(R=0,A=0)}} \mathrm{d}\x\\
	&=&\int p(\x|R=0,A=0)\log \frac{p(\x|R=0,A=0)}{p(\x|R=1,A=0)} \mathrm{d}\x\\
	&&-\int p(\x|R=0,A=0)\log \frac{\sum_{k=0}^1 p(\x|Y=k,A=0)\beta_{k0}\frac{1-\pi}{\pr(R=0,A=0)}}{p(\x|R=1,A=0)} \mathrm{d}\x\\
	&=&\int p(\x|R=0,A=0)\log \frac{p(\x|R=0,A=0)}{p(\x|R=1,A=0)} \mathrm{d}\x\\
	&&-\int p(\x|R=0,A=0)\log \sum_{k=0}^1\frac{\pr(Y=k|\x,R=1,A=0)\beta_{k0}(1-\pi)\pr(R=1,A=0)}{\pr(R=1,Y=k,A=0)\pr(R=0,A=0)} \mathrm{d}\x.
	\ese
	
	Minimizing the above equation is equivalent to maximizing
	\bse
	\argmax_{\bb}E\left\{\log \sum_{k=0}^1\pr(Y=k|\x, R=1,A=0)\frac{\beta_{k0}}{\pr(Y=k|R=1,A=0)}\middle|R=0,A=0\right\},
	\ese
	subject to $\pr(R=0,A=0)=\beta_{10}(1-\pi)+\beta_{00}(1-\pi)$.
	
	We enforce this restriction as a constraint in the distribution matching problem:
	where $D$ is a discrepancy between probability distributions on $\mathcal{X}$.


	Define
	\bse
	\bL(\xi_0,b_1,\beta_{10},\varrho)=E\left(\log [\xi_0(\X)  b_1^{-1}\beta_{10}+\{1-\xi_0(\X)\}  (1-b_1)^{-1}(\varrho-\beta_{10})]\middle|R=0,A=0\right).
	\ese
	Its empirical version is 
	
	\bse
	\wh{\bL}(\xi_0,b_1,\beta_{10},\varrho)=\wh{E}\left(\log [\xi_0(\X)  b_1^{-1}\beta_{10}+\{1-\xi_0(\X)\}  (1-b_1)^{-1}(\varrho-\beta_{10})]\middle|R=0,A=0\right).
	\ese
	\end{proof}

	       \subsubsection*{An Alternative Approach for Estimating $\bb$}\label{method:moment}
	In the main text, we explore the use of distribution matching for estimating $\bb$. Alternatively, it is sufficient to only consider some moments instead of the whole 
    distribution. For any measurable function $\m(\x)$, the law of total expectation yields the identity:
	\be
	&&E\{\m(\x)\mid R=0,A=0\}\pr(R=0,A=0)\n\\
	&=&E\{\m(\x)\mid 1,0\}\beta_{10}(1-\pi) + E\{\m(\x)\mid 0,0\}\beta_{00}(1-\pi). \label{eq:momentmatch}
	\ee
	Rewriting equation~(\ref{eq:momentmatch}), we obtain the following linear system:
	\bse
	(1-\pi)p_{0\cdot0}^{-1} \left[ E\{\m(\x)\mid 1,0\}, E\{\m(\x)\mid 0,0\}\right]\bb 
	= E\{\m(\x)\mid R=0,A=0\},
	\ese
	which leads to the expression
	\bse
	\bb = (1-\pi)^{-1}p_{0\cdot0} \left[ E\{\m(\x)\mid 1,0\}, E\{\m(\x)\mid 0,0\}\right]^{-1} 
	E\{\m(\x)\mid R=0,A=0\},
	\ese
	provided that the $2\times 2$ matrix $\left[ E\{\m(\x)\mid 1,0\},
	E\{\m(\x)\mid 0,0\}\right]$ is invertible.
	To use the idea of moment matching, one has the flexibility of choosing different moments $\m(\x)$.
	Certainly, a further research question of interest is to identify the optimal choice of this moment function, say, $\m_{\rm opt}(\x)$, by borrowing the semiparametric techniques \citep{bickel1993efficient, tsiatis2006semiparametric}.

    \subsection{Proofs and More Details in Section~\ref{sec:theory}}\label{sec:supp:theory}
    
	
	We define the Rademacher complexity \citep{bartlett2002rademacher} that has been frequently
	used in machine learning literature to establish a generalization bound. Instead of considering
	the Rademacher complexity on $\mF$ we define the class of weighted losses $\mG(\ell,\mF)=[w(x,y)\ell\{g(x),y\}:g\in \mF]$ and $n\in N$ we define its Rademacher complexity measure as 
	\bse
	\mR_n(\mG):= E_{u_i,v_i} \left(E_{\xi_i}\left[\sup_{h\in \mF}\frac{1}{n}\sum_{i=1}^n\xi_iw(u_i,v_i)\ell\{g(u_i),v_i\}\right]\right),
	\ese
	where $\{\xi_i\}_{i=1}^n$ are i.i.d. Rademacher random variables, taking values $\pm 1$ with equal probability $1/2$.

	\begin{proof}[Proof of Theorem \ref{thm:conver}]
		For a probabilistic classifier: $\{\xi_0(\x),1-\xi_0(\x)\}:\mX\rightarrow \Delta^2$, and the parameter 
		$\bb\trans=(\beta_{10},\beta_{00})$ and $b_1=\pr(Y=1|R=1,A=0)$, we define the centered logit function $\f:\mX\rightarrow \R^2$ as $f_0(\x)=\log\xi_0(\x)-\frac{1}{2}[\log\xi_0(\x)+\log\{1-\xi_0(\x)\}]$ and $f_1(\x)=\log\{1-\xi_0(\x)\}-\frac{1}{2}[\log\xi_0(\x)+\log\{1-\xi_0(\x)\}]$. We define the functions $\mu(f_0,b_1)=\xi_0(\x)b_1^{-1}-\{1-\xi_0(\x)\}(1-b_1)^{-1}$ and
		$\omega(f_0,b_1,\beta_{10},\varrho)=\xi_0(\x)b_1^{-1}\beta_{10}+\{1-\xi_0(\x)\}(1-b_1)^{-1}(\varrho-\beta_{10})$, and notice that the objective is
		\bse
		\wh{L}(f_0,b_1,\beta_{10}, \varrho)=\wh{E}\left\{\log \omega(f_0,b_1,\beta_{10},\varrho)|R=0,A=0\right\},
		\ese
		whereas the true objective is 
		\bse
		L(f_0,b_1,\beta_{10}, \varrho)=E\left\{\log \omega(f_0,b_1,\beta_{10},\varrho)|R=0,A=0\right\},
		\ese
		
		We see that the first-order optimality conditions in estimating $\wh\beta_{10}$ are
		\be\label{Taylor}
		0&=&\partial_{\beta_{10}}\wh{L}(\wh{f_0},\wh{b}_1,\wh{\beta}_{10},\wh\varrho)\\ \nonumber
		&=&\partial_{\beta_{10}}\left[\wh{E}\left\{\log \omega(\wh{f}_0,\wh{b}_1,\wh\beta_{10},\wh\varrho)\middle|R=0,A=0\right\}\right]\\ \nonumber &=&\wh{E}\left[\frac{\partial_{\beta_{10}}\{\omega(\wh{f}_0,\wh{b}_1,\wh\beta_{10},\wh\varrho)\}}{\omega(\wh{f}_0,\wh{b}_1,\wh\beta_{10},\wh\varrho)}\middle|R=0,A=0\right].
		\ee
		Similarly, the first order optimality condition at truth (for $\beta_{10}$) are
		\bse
		0&=&\partial_{\beta_{10}}{L}({f_0},{b_1},{\beta}_{10},\varrho)\\ \nonumber
		&=&\partial_{\beta_{10}}\left[{E}\left\{\log \omega({f}_0,{b}_1,\beta_{10},\varrho)\middle|R=0,A=0\right\}\right]\\ \nonumber &=&{E}\left[\frac{\partial_{\beta_{10}}\{\omega(f_0,b_1,\beta_{10},\varrho)\}}{\omega(f_0,b_1,\beta_{10},\varrho)}\middle|R=0,A=0\right].
		\ese
		We decompose (\ref{Taylor}) using the Taylor expansion and obtain:
		\bse
		0=\partial_{\beta_{10}}\wh{L}(f_0,\wh{b}_1,\wh\beta_{10},\wh\varrho)+\langle \wh{f}_0-f_0,\partial_{f_0} \partial_{\beta_{10}}\wh{L}(\wt{f}_0,\wh{b}_1,\wh\beta_{10},\wh\varrho)\rangle
		\ese
		where $\wt{f}_0$ is a function in the bracket $[f_0,\wh{f}_0]$, i.e. for every $\x$, $\wt{f}_0(\x)$ is a number between $\wh{f}_0(\x)$ and $f_0(\x)$.\\
		{\bf{Bound on $\langle \wh{f}_0-f_0,\partial_{f_0} \partial_{\beta_{10}}\wh{L}(\wt{f}_0,\wh{b}_1,\wh\beta_{10},\wh\varrho)\rangle$:}}

		To bound the term, we define $\zeta_0=\wh{f}_0-f_0$ and notice that
		\bse
		&&\langle \zeta_0, \partial_{f_0} \partial_{\beta_{10}}\wh{L}(\wt{f}_0,\wh{b}_1,\wh\beta_{10},\wh\varrho) \rangle\\
		&=&\left\langle \zeta_0, \partial_{f_0}\left[ \wh{E}\left\{\frac{\mu(\wt{f}_0,\wh{b}_1)}{\omega(\wt{f}_0,\wh{b}_1,\wh\beta_{10},\wh\varrho)}\middle|R=0,A=0\right\}\right] \right\rangle\\
		&=&\wh{E}\left(\zeta_0 \frac{2\wt\xi_0(1-\wt\xi_0)}{\omega(\wt{f}_0,\wh{b}_1,\wh\beta_{10},\wh\varrho)}
		\left[\wh{b}_1^{-1}+(1-\wh{b}_1)^{-1}\right.\right.\\
&&\left.\left.-\frac{\mu(\wt{f}_0,\wh{b}_1)}{\omega(\wt{f}_0,\wh{b}_1,\wh\beta_{10},\wh\varrho)}
\left\{\wh{b}_1^{-1}\wh{\beta}_{10}-(1-\wh{b}_1)^{-1}(\wh\varrho-\wh\beta_{10})\right\}\right]\middle|R=0,A=0\right).
		\ese
		The derivative in third equality in the above display is calculated in Lemma \ref{Derivatives}.
		Assume $\varrho-\epsilon>\beta_{10}>\epsilon>0$ and $1-\epsilon_1>b_1>\epsilon_1>0$, i.e., there exist a $c_1>0$ such that \\
$\left|\frac{2\wt\xi_0(1-\wt\xi_0)}{\omega(\wt{f}_0,\wh{b}_1,\wh\beta_{10},\wh\varrho)}
		\left[\wh{b}_1^{-1}+(1-\wh{b}_1)^{-1}-\frac{\mu(\wt{f}_0,\wh{b}_1)}{\omega(\wt{f}_0,\wh{b}_1,\wh\beta_{10},\wh\varrho)}
\left\{\wh{b}_1^{-1}\wh{\beta}_{10}-(1-\wh{b}_1)^{-1}(\wh\varrho-\wh\beta_{10})\right\}\right]\right|<c_1$. This implies the followings: we have
		\bse
		&&\left|\wh{E}\left(\zeta_0 \frac{2\wt\xi_0(1-\wt\xi_0)}{\omega(\wt{f}_0,\wh{b}_1,\wh\beta_{10},\wh\varrho)}
		\left[\wh{b}_1^{-1}+(1-\wh{b}_1)^{-1}\right.\right.\right.\\
&&\left.\left.\left.-\frac{\mu(\wt{f}_0,\wh{b}_1)}{\omega(\wt{f}_0,\wh{b}_1,\wh\beta_{10},\wh\varrho)}
\left\{\wh{b}_1^{-1}\wh\beta_{10}-(1-\wh{b}_1)^{-1}(\wh\varrho-\wh\beta_{10})\right\}\right]\middle|R=0,A=0\right)\right|\\
&\leq& c_1\wh{E}\{|\zeta_0(\x)||R=0,A=0\}.
		\ese
		It follows from Assumption \ref{ass1}
		with probability at least $1-\delta$ it holds $\sup_{i\in [n_{0\cdot 0}]}\|\wh\f(\x_{i})-\f(\x_{i})\|_2\leq c r_{n_{1\cdot 0}}\sqrt{\log(n_{0\cdot 0})\log(1/\delta)}$, we conclude that
		\bse
		|\langle \zeta_0, \partial_{f_0} \partial_{\beta_{10}}\wh{L}(\wt{f}_0,\wh{b}_1,\wh\beta_{10},\wh\varrho) \rangle|\leq cc_1r_{n_{1\cdot 0}}\sqrt{\log(n_{0\cdot 0})\log(1/\delta)}
		\ese
		holds with probability at least $1-\delta$.
		
		{\bf{Bound on $\partial_{\beta_{10}}\wh{L}({f}_0,\wh{b}_1,\wh\beta_{10},\wh\varrho)-
				\partial_{\beta_{10}}\wh{L}({f}_0,{b}_1,\wh\beta_{10},\wh\varrho)$}}.
		
		Using the taylor expansion, we have
		\bse
		&&\partial_{\beta_{10}}\wh{L}({f}_0,\wh{b}_1,\wh\beta_{10},\wh\varrho)-
				\partial_{\beta_{10}}\wh{L}({f}_0,{b}_1,\wh\beta_{10},\wh\varrho)
		=\langle \wh b_1-b_1, \partial_{b_1}\partial_{\beta_{10}}\wh{L}({f}_0,\wt{b}_1,\wh\beta_{10},\wh\varrho)\rangle \\
&=&\left\langle \wh b_1-b_1, \partial_{b_1}\left[ \wh{E}\left\{\frac{\mu({f}_0,\wt{b}_1)}{\omega({f}_0,\wt{b}_1,\wh\beta_{10},\wh\varrho)}\middle|R=0,A=0\right\}\right]\right\rangle\\
		&=&\wh{E}
		\left\{(\wh b_1-b_1) \left[\frac{-\xi_0(\x)\wt{b}_1^{-2}-\{1-\xi_0(\x)\}(1-\wt{b}_1)^{-2}}{\omega({f}_0,\wt{b}_1,\wh\beta_{10},\wh\varrho)}\right.\right.\\
&&\left.\left.-\frac{\mu(f_0,\wt{b}_1)}{\omega({f}_0,\wt{b}_1,\wh\beta_{10},\wh\varrho)}
\frac{-\xi_0(\x)\wt{b}_1^{-2}\wh{\beta}_{10}+\{1-\xi_0(\x)\}(1-\wt{b}_1)^{-2}(\wh{\varrho}-\wh{\beta}_{10})}{\omega({f}_0,\wt{b}_1,\wh\beta_{10},\wh\varrho)}\right]\right\}.
		\ese
		Assume $\varrho-\epsilon>\beta_{10}>\epsilon>0$ and $1-\epsilon_1>b_1>\epsilon_1>0$, i.e., there exist a $c_2>0$ such that $\left|\left[\frac{-\xi_0(\x)\wt{b}_1^{-2}-\{1-\xi_0(\x)\}(1-\wt{b}_1)^{-2}}{\omega({f}_0,\wt{b}_1,\wh\beta_{10},\wh\varrho)}
-\frac{\mu(f_0,\wt{b}_1)}{\omega({f}_0,\wt{b}_1,\wh\beta_{10},\wh\varrho)}
\frac{-\xi_0(\x)\wt{b}_1^{-2}\wh{\beta}_{10}+\{1-\xi_0(\x)\}(1-\wt{b}_1)^{-2}(\wh{\varrho}-\wh{\beta}_{10})}{\omega({f}_0,\wt{b}_1,\wh\beta_{10},\wh\varrho)}\right]\right|<c_2$. This implies the followings: we have
		\bse
		&&\Bigg|\wh{E}
		\left\{(\wh b_1-b_1) \left[\frac{-\xi_0(\x)\wt{b}_1^{-2}-\{1-\xi_0(\x)\}(1-\wt{b}_1)^{-2}}{\omega({f}_0,\wt{b}_1,\wh\beta_{10},\wh\varrho)}\right.\right.\\
&&\left.\left.-\frac{\mu(f_0,\wt{b}_1)}{\omega({f}_0,\wt{b}_1,\wh\beta_{10},\wh\varrho)}
\frac{-\xi_0(\x)\wt{b}_1^{-2}\wh{\beta}_{10}+\{1-\xi_0(\x)\}(1-\wt{b}_1)^{-2}(\wh{\varrho}-\wh{\beta}_{10})}{\omega({f}_0,\wt{b}_1,\wh\beta_{10},\wh\varrho)}\right]\right\}\Bigg|\\
		&\leq& c_2 |\wh b_1-b_1|.
		\ese
		We apply Hoeffing’s concentration inequality for a sample mean of i.i.d. sub-gaussian random variable $Y_i$ and obtain a $c_3>0$  such that for any $\delta>0$ with probability at least $1-\delta$ it holds
		\bse
		|\wh b_1-b_1|=|\wh\pr(Y = 1|R = 1, A = 0)-\pr(Y = 1|R = 1, A = 0)|\leq c_3\sqrt{\frac{\log(1/\delta)}{n_{1\cdot 0}}}.
		\ese

		{\bf{Bound on $\partial_{\beta_{10}}\wh{L}({f}_0,{b}_1,\wh\beta_{10},\wh\varrho)-
				\partial_{\beta_{10}}\wh{L}({f}_0,{b}_1,\wh\beta_{10},\varrho)$}}.
		
		Using the taylor expansion, we have
		\bse
		&&\partial_{\beta_{10}}\wh{L}({f}_0,{b}_1,\wh\beta_{10},\wh\varrho)-\partial_{\beta_{10}}\wh{L}({f}_0,{b}_1,\wh\beta_{10},\varrho)
		=\langle \wh\varrho-\varrho, \partial_{\varrho}\partial_{\beta_{10}}\wh{L}(f_0, {b}_1,\wh\beta_{10},\wt\varrho)\rangle \\
		&=&\wh{E}
		\left[(\wh\varrho-\varrho) \frac{-\mu(f_0,b_1)\{1-\xi_0(\x)\}(1-b_1)^{-1}}{\omega^2({f}_0,{b}_1,\wh\beta_{10},\wt\varrho)}\middle|R=0,A=0\right].
		\ese
		Assume $\varrho-\epsilon>\beta_{10}>\epsilon>0$ and $1-\epsilon_1>b_1>\epsilon_1>0$, i.e., there exist a $c_4>0$ such that $\left|\frac{-\mu(f_0,b_1)\{1-\xi_0(\x)\}(1-b_1)^{-1}}{\omega^2({f}_0,{b}_1,\wh\beta_{10},\wt\varrho)}\right|<c_4$. This implies the followings: we have
		\bse
		\left|\wh{E}
		\left[(\wh\varrho-\varrho) \frac{-\mu(f_0,b_1)\{1-\xi_0(\x)\}(1-b_1)^{-1}}{\omega^2({f}_0,{b}_1,\wh\beta_{10},\wt\varrho)}\middle|R=0,A=0\right]
\right|
		\leq c_4 |\wh\varrho-\varrho|.
		\ese
		We apply Hoeffing’s concentration inequality for a sample mean of i.i.d. sub-gaussian random variable $A_i$ and obtain a $c_5>0$  such that for any $\delta>0$ with probability at least $1-\delta$ it holds
		\bse
		|\wh\varrho-\varrho|=|\wh\pr(A = 0|R = 0)-\pr(A = 0|R=0)|\leq c_5\sqrt{\frac{\log(1/\delta)}{n_{0}}}.
		\ese

		{\bf{The term $\partial_{\beta_{10}}\wh{L}(f_0,\wh{b}_1,\wh\beta_{10}, \wh\varrho)$:}} We  have
		\bse
		&&\partial_{\beta_{10}}\wh{L}(f_0,\wh{b}_1,\wh\beta_{10}, \wh\varrho)-
		\partial_{\beta_{10}}\wh{L}(f_0,{b}_1,\wh\beta_{10}, \varrho)+\partial_{\beta_{10}}\wh{L}(f_0,{b}_1,\wh\beta_{10},\varrho)\\
&=&\partial_{\beta_{10}}\wh{L}(f_0,{b}_1,\wh\beta_{10},\varrho)+O_p(|\wh{b}_1-b_1|+|\wh\varrho-\varrho|).
		\ese
		Now, we study the term $\partial_{\beta_{10}}\wh{L}(f_0,{b}_1,\wh\beta_{10},\varrho)$, use strong convexity of $-L(f_0,{b}_1,\beta_{10},\varrho)$ with $\beta_{10}$ and the convergence of the loss that
		\bse
		\sup_{\beta_{10}\in (0,\varrho)}|\wh{L}(f_0,{b}_1,\beta_{10},\varrho)-L(f_0,{b}_1,\beta_{10},\varrho)|
		\xrightarrow{n_{0\cdot 0}\rightarrow \infty}0
		\ese
		for $\beta_{10}\in (0,\varrho)$ in \cite{wellner2013weak}(see Corollary 3.2.3) to conclude that $\wh\beta_{10}\rightarrow \beta_{10}$ in probability and hence $\wh\beta_{10}$ is a consistent estimator for $\beta_{10}$.
		
		Following the consistency of $\wh\beta_{10}$ we see that for sufficiently large $n_{0\cdot 0}$, we have $|\wh\beta_{10}-\beta_{10}|\leq \delta_{\beta}$($\delta_{\beta}$ is chosen bound by $\frac{\beta_{10}}{2} \wedge\frac{\varrho-\beta_{10}}{2}$) with probability at least $1-\delta$
		and on the event it holds: $\wh\beta_{10}\in[\beta_{10}-\delta_{\beta},\beta_{10}+\delta_{\beta}]$. We define empirical process
		\bse
		Z_{n_{0\cdot 0}}=\sup_{\beta\in[\beta_{10}-\delta_{\beta},\beta_{10}+\delta_{\beta}]}
		|\partial_{\beta}\wh{L}(f_0,{b}_1,\beta,\varrho)-\partial_{\beta}L(f_0,{b}_1,\beta,\varrho)|
		\ese
		for which we shall provide a high probability upper bound. We denote $Z_{n_{0\cdot 0}}(\beta)=\partial_{\beta}\wh{L}(f_0,{b}_1,\beta,\varrho)-\partial_{\beta}L(f_0,{b}_1,\beta,\varrho)$ and notice that
		\bse
		&&\partial_{\beta}\wh{L}(f_0,{b}_1,\beta,\varrho)-\partial_{\beta}L(f_0,{b}_1,\beta,\varrho)\\
		&=&\wh{E}\left\{\frac{\mu(f_0,b_1)}{\omega(f_0,b_1,\beta,\varrho)}\middle|R=0,A=0\right\}
		-{E}\left\{\frac{\mu(f_0,b_1)}{\omega(f_0,b_1,\beta,\varrho)}\middle|R=0,A=0\right\}
		:=A(\beta)
		\ese
		where to bound $A(\beta)$ we notice that $\frac{\mu(f_0,b_1)}{\omega(f_0,b_1,\beta,\varrho)}$ are i.i.d. and bounded by $c_0$($\beta^{-1}+(\varrho-\beta)^{-1}\leq \frac{2}{\beta_{10}}+\frac{2}{\varrho-\beta_{10}}\leq c_0$ for all $\x\in\mX$) and hence sub-gaussian. We apply Hoeffding's concentration inequality for a sample mean if i.i.d. sub-gaussian random variables and obtain a constant $c_6>0$ such that for any $\delta>0$ with probability at least $1-\delta$ it holds
		\bse
		A(\beta)&=&\wh{E}\left\{\frac{\mu(f_0,b_1)}{\omega(f_0,b_1,\beta,\varrho)}\middle|R=0,A=0\right\}
		-{E}\left\{\frac{\mu(f_0,b_1)}{\omega(f_0,b_1,\beta,\varrho)}\middle|R=0,A=0\right\}\\
        &&\leq c_0 c_6\sqrt{\frac{\log(1/\delta)}{n_{0\cdot 0}}}.
		\ese
		Use chained arguments for $\ell_1$ with interval length $2\delta_{\beta}$ we obtain a uniform bound as the following: there exists a constant $c_7>0$ such that for any $\delta>0$ with probability at least $1-\delta$ if it holds
		\bse
		\sup_{\beta\in[\beta_{10}-\delta_{\beta},\beta_{10}+\delta_{\beta}]}A(\beta)\leq c_0 c_6c_7\sqrt{\frac{\log(1/\delta)}{n_{0\cdot 0}}}.
		\ese
		Therefore, with probability at least $1-\delta$, we have
		\bse
		Z_{n_{0\cdot 0}}\leq c_0 c_6c_7\sqrt{\frac{\log(1/\delta)}{n_{0\cdot 0}}}.
		\ese
		Returning to the first order optimality condition for estimating $\wh\beta_{10}$ we notice that
		\bse
		0&=&(\wh\beta_{10}-\beta_{10})\left\{\partial_{\beta_{10}}\wh{L}(f_0,\wh{b}_1,\wh\beta_{10},\wh\varrho)+\langle \wh f_0-f_0,\partial_{f_0} \partial_{\beta_{10}}\wh{L}(\wt{f}_0,\wh{b}_1,\wh\beta_{10},\wh\varrho)\rangle\right\}\\ &=&(\wh\beta_{10}-\beta_{10})\left\{\partial_{\beta_{10}}\wh{L}(f_0,\wh{b}_1,\wh\beta_{10},\wh\varrho)-\partial_{\beta_{10}}\wh{L}(f_0,b_1,\wh\beta_{10},\varrho)
+\partial_{\beta_{10}}\wh{L}(f_0,b_1,\wh\beta_{10},\varrho)\right.\\
&&\left.+\langle \wh{f}_0-f_0,\partial_{f_0} \partial_{\beta_{10}}\wh{L}(\wt{f}_0,\wh{b}_1,\wh\beta_{10},\wh\varrho)\rangle\right\}\\
		&=&(\wh\beta_{10}-\beta_{10}) \partial_{\beta_{10}}L(f_0,b_1,\wh\beta_{10},\varrho)\\	&&+(\wh\beta_{10}-\beta_{10})\left\{\partial_{\beta_{10}}\wh{L}(f_0,\wh{b}_1,\wh\beta_{10},\wh\varrho)-\partial_{\beta_{10}}\wh{L}(f_0,b_1,\wh\beta_{10},\varrho)+ Z_{n_{0\cdot 0}}(\wh\beta_{10})\right.\\
&&\left.+\langle \wh{f}_0-f_0,\partial_{f_0} \partial_{\beta_{10}}\wh{L}(\wt{f}_0,\wh{b}_1,\wh\beta_{10},\wh\varrho)\rangle\right\}.
		\ese
		We combine it with the first order optimality condition for $\bb$ to obtain
		\bse
		&&(\wh\beta_{10}-\beta_{10})\left\{\partial_{\beta_{10}}L(f_0,b_1,\wh\beta_{10},\varrho)-\partial_{\beta_{10}}L(f_0,b_1,\beta_{10},\varrho)\right\}\\
&&+(\wh\beta_{10}-\beta_{10})\left\{\partial_{\beta_{10}}\wh{L}(f_0,\wh{b}_1,\wh\beta_{10},\wh\varrho)-\partial_{\beta_{10}}\wh{L}(f_0,b_1,\wh\beta_{10},\varrho)+ Z_{n_{0\cdot 0}}(\wh\beta_{10})\right.\\
&&\left.+\langle \wh{f}_0-f_0,\partial_{f_0} \partial_{\beta_{10}}\wh{L}(\wt{f}_0,\wh{b}_1,\wh\beta,\wh\varrho)\rangle\right\}=0,
		\ese
		which can be rewritten as
		\be\label{q1}
		&&-(\wh\beta_{10}-\beta_{10})\left\{\partial_{\beta_{10}}L(f_0,b_1,\wh\beta_{10},\varrho)-\partial_{\beta_{10}}L(f_0,b_1,\beta_{10},\varrho)\right\}\\ \nonumber
		&=&(\wh\beta_{10}-\beta_{10})\left\{\partial_{\beta_{10}}\wh{L}(f_0,\wh{b}_1,\wh\beta_{10},\wh\varrho)-\partial_{\beta_{10}}\wh{L}(f_0,b_1,\wh\beta_{10},\varrho)+ Z_{n_{0\cdot 0}}(\wh\beta_{10})\right.\\ \nonumber
&&\left.+\langle \wh{f}_0-f_0,\partial_{f_0} \partial_{\beta_{10}}\wh{L}(\wt{f}_0,\wh{b}_1,\wh\beta_{10},\wh\varrho)\rangle\right\}.
		\ee
		Using the strong convexity of function $-L$ at $\beta_{10}$, we obtain that the left-hand side in the above equation
		is lower bounded as
		\be\label{A15}
		-(\wh\beta_{10}-\beta_{10}) \left\{\partial_{\beta_{10}}L(f_0,b_1,\wh\beta_{10},\varrho)-\partial_{\beta_{10}}L(f_0,b_1,\beta_{10},\varrho)\right\}\geq \mu(\wh\beta_{10}-\beta_{10})^2.
		\ee
		Let $\mE$ be the event on which the following hold:
		\begin{itemize}[left=0pt]
			\item $|\wh\beta_{10}-\beta_{10}|\leq \delta_{\beta}$.
			\item $|\langle \wh{f}_0-f_0,\partial_{f_0} \partial_{\beta_{10}}\wh{L}(\wt{f}_0,\wh{b}_1,\wh\beta_{10}, \wh\varrho)\rangle|\leq cc_1r_{n_{1\cdot 0}}\sqrt{\log(n_{0\cdot 0})\log(1/\delta)}$.
			\item $Z_{n_{0\cdot 0}}\leq c_0 c_6c_7\sqrt{\frac{\log(1/\delta)}{n_{0\cdot 0}}}$.
			\item $|\partial_{\beta_{10}}\wh{L}(f_0,\wh{b}_1,\wh\beta_{10}, \wh\varrho)-\partial_{\beta_{10}}\wh{L}(f_0,b_1,\wh\beta_{10}, \varrho)|\leq (c_2c_3+c_4c_5)\left\{\sqrt{\frac{\log(1/\delta)}{n_{1\cdot 0}}}+\sqrt{\frac{\log(1/\delta)}{n_{0}}}\right\}$.
		\end{itemize}
		We notice that the event $\mE$ has probability $1-5\delta$.
		Under the event there exists a $c_8 > 0$ such
		that the right-hand side in (\ref{q1}) is upper bounded as
		\be\label{A16}
        &&\left|(\wh\beta_{10}-\beta_{10})\left\{\partial_{\beta_{10}}\wh{L}(f_0,\wh{b}_1,\wh\beta_{10},\wh\varrho)-\partial_{\beta_{10}}\wh{L}(f_0,b_1,\wh\beta_{10},\varrho)+ Z_{n_{0\cdot 0}}(\wh\beta_{10})\right.\right.\\ \nonumber
&&\left.\left.+\langle \wh{f}_0-f_0,\partial_{f_0} \partial_{\beta_{10}}\wh{L}(\wt{f_0},\wh{b}_1,\wh\beta_{10},\varrho)\rangle\right\}\right|\\ \nonumber
		&\leq& |\wh\beta_{10}-\beta_{10}|\left\{|\partial_{\beta_{10}}\wh{L}(f_0,\wh{b}_1,\wh\beta_{10},\wh\varrho)-\partial_{\beta_{10}}\wh{L}(f_0,b_1,\wh\beta_{10},\varrho)|+ |Z_{n_{0\cdot 0}}(\wh\beta_{10})|\right.\\ \nonumber
       &&\left.+|\langle \wh{f}_0-f_0,\partial_{f_0} \partial_{\beta_{10}}\wh{L}(\wt{f}_0,\wh{b}_1,\wh\beta_{10},\wh\varrho)\rangle|\right\}\\ \nonumber
		&\leq &c_8\left\{r_{n_{1\cdot 0}}\sqrt{\log(n_{0\cdot 0})\log(1/\delta)}
		+\sqrt{\frac{\log(1/\delta)}{n_{1\cdot 0}}}+\sqrt{\frac{\log(1/\delta)}{n_{0}}}+\sqrt{\frac{\log(1/\delta)}{n_{0\cdot 0}}}\right\}|\wh\beta_{10}-\beta_{10}|.
		\ee
		Combining the bounds (\ref{A15}) and (\ref{A16}) for left and right hand sides we obtain a $c_{10} > 0$ such that
		on the event $\mE$ it holds
		\bse
		|\wh\beta_{10}-\beta_{10}|\leq c_{10}\left\{r_{n_{1\cdot 0}}\sqrt{\log(n_{0\cdot 0})\log(1/\delta)}
		+\sqrt{\frac{\log(1/\delta)}{n_{1\cdot 0}}}+\sqrt{\frac{\log(1/\delta)}{n_{0}}}+\sqrt{\frac{\log(1/\delta)}{n_{0\cdot 0}}}\right\}.
		\ese
 
        Further, since $A_i$ is bound random variable, then we can obtain a constant $c_9>0$ such that for any $\delta>0$ with probability at least $1-\delta$ it holds 
        \bse
        |\wh\beta_{00}-\beta_{00}|=|(\wh\varrho-\wh\beta_{10})-(\varrho-\beta_{10})|
        \leq |\wh\varrho-\varrho|+|\wh\beta_{10}-\beta_{10}|\leq c_9\sqrt{\frac{\log(1/\delta)}{n_{0}}}+|\wh\beta_{10}-\beta_{10}|.
        \ese
        In summary, we have a constant $c_{10}>0$ such that for any $\delta>0$ with probability at least $1-6\delta$ it holds
        \bse
        \|\wh\bb-\bb\|_1\leq c_{10}\left\{r_{n_{1\cdot 0}}\sqrt{\log(n_{0\cdot 0})\log(1/\delta)}
		+\sqrt{\frac{\log(1/\delta)}{n_{1\cdot 0}}}+\sqrt{\frac{\log(1/\delta)}{n_{0}}}++\sqrt{\frac{\log(1/\delta)}{n_{0\cdot 0}}}\right\}.
        \ese
	\end{proof}

	\begin{lemma}\label{Derivatives}(Derivatives).The following equations hold:
		\begin{itemize}[left=0pt]
			\item $\partial_{f_0}(\xi_0)=2\xi_0(1-\xi_0)$;
			\item $\partial_{f_0}\{\mu(f_0,b_1)\}=\partial_{f_0}(\xi_0)\{b_1^{-1}+(1-b_1)^{-1}\}$;
			\item $\partial_{f_0}\{\omega(f_1,b_1,\beta_{10},\varrho)\}=\partial_{f_0}(\xi_0)\{b_1^{-1}\beta_{10}-(1-b_1)^{-1}(\varrho-\beta_{10})\}$;
			\item $\partial_{f_0}\left\{\frac{\mu(f_0,b_1)}{\omega(f_0,b_1,\beta_{10},\varrho)}\right\}=\frac{2\xi_0(1-\xi_0)}{\omega(f_0,b_1,\beta_{10},\varrho)}
\left[b_1^{-1}+(1-b_1)^{-1}-\frac{\mu(f_0,b_1)\{b_1^{-1}\beta_{10}-(1-b_1)^{-1}(\varrho-\beta_{10})\}}{\omega(f_0,b_1,\beta_{10},\varrho)}\right]$.
		\end{itemize}
	\end{lemma}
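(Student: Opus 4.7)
The plan is to treat the four formulas as sequential consequences of one observation: $f_0$ is just the (rescaled) logit of $\xi_0$, so all derivatives reduce to a single sigmoid-style chain rule. From the definition
\[
f_0(\x) \;=\; \log\xi_0(\x) \;-\; \tfrac{1}{2}\bigl\{\log\xi_0(\x) + \log(1-\xi_0(\x))\bigr\} \;=\; \tfrac{1}{2}\log\!\left\{\frac{\xi_0(\x)}{1-\xi_0(\x)}\right\},
\]
I would solve for $\xi_0 = 1/(1+e^{-2f_0})$, i.e.\ $\xi_0 = \sigma(2f_0)$ with $\sigma$ the logistic sigmoid. Differentiating this standard expression gives the first identity $\partial_{f_0}\xi_0 = 2\xi_0(1-\xi_0)$ immediately. (As a sanity check, note that the complementary centered logit satisfies $f_1 = -f_0$, so $f_1$ carries no independent information and all derivatives may legitimately be taken with respect to $f_0$ alone.)

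For the second identity, I would apply the chain rule to $\mu(f_0,b_1) = \xi_0 b_1^{-1} - (1-\xi_0)(1-b_1)^{-1}$, which is affine in $\xi_0$ with no other $f_0$-dependence. Using $\partial_{f_0}(1-\xi_0) = -\partial_{f_0}\xi_0$ and factoring out $\partial_{f_0}\xi_0$ yields $\partial_{f_0}\mu = (\partial_{f_0}\xi_0)\{b_1^{-1}+(1-b_1)^{-1}\}$. The third identity follows in exactly the same way, since $\omega(f_0,b_1,\beta_{10},\varrho)$ is also affine in $\xi_0$, now with coefficients $b_1^{-1}\beta_{10}$ and $-(1-b_1)^{-1}(\varrho-\beta_{10})$, so that $\partial_{f_0}\omega = (\partial_{f_0}\xi_0)\{b_1^{-1}\beta_{10} - (1-b_1)^{-1}(\varrho-\beta_{10})\}$.

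The fourth identity is then the quotient rule applied to $\mu/\omega$. Substituting the formulas from the previous two steps into $(\partial_{f_0}\mu\cdot\omega - \mu\cdot\partial_{f_0}\omega)/\omega^2$, I would observe that both summands in the numerator share the common factor $\partial_{f_0}\xi_0 = 2\xi_0(1-\xi_0)$. Pulling this factor outside produces exactly the bracketed expression in the statement, with the first piece $\{b_1^{-1}+(1-b_1)^{-1}\}$ coming from $\partial_{f_0}\mu/\omega$ and the second piece $-\mu\{b_1^{-1}\beta_{10}-(1-b_1)^{-1}(\varrho-\beta_{10})\}/\omega$ coming from $-\mu\,\partial_{f_0}\omega/\omega^2$.

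I do not anticipate any real obstacle: the lemma is purely computational. The only care required is sign-tracking when differentiating $(1-\xi_0)$ and some bookkeeping when collecting the common factor $2\xi_0(1-\xi_0)$ across the two quotient-rule terms. The genuine content is not the algebra but the identification of the sigmoid reparameterization at the start, which is what makes every subsequent derivative collapse to one elementary step.
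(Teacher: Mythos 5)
Your proposal is correct and follows essentially the same route as the paper's proof: both recognize that $\xi_0=e^{f_0}/(e^{f_0}+e^{-f_0})$ (your $\sigma(2f_0)$), differentiate to get $2\xi_0(1-\xi_0)$, and then apply the chain rule to the affine functions $\mu$ and $\omega$ followed by the quotient rule for $\mu/\omega$. The only cosmetic difference is that you derive the sigmoid reparameterization explicitly from the centered-logit definition, whereas the paper states it directly.
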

	
	\begin{proof}
		\bse
		&&\partial_{f_0}\xi_0=\partial_{f_0}\left(\frac{e^{f_{0}}}{ e^{f_{0}}+e^{-f_0}}\right)
		=\frac{e^{f_{0}}\left(e^{f_{0}}+e^{-f_{0}}\right)-e^{f_{0}}\left(e^{f_{0}}-e^{-f_{0}}\right)}{\left(e^{f_{0}}+e^{-f_0}\right)^2}
		=2\xi_0(1-\xi_0),\\
		&&\partial_{f_0}\{\mu(f_0,b_1)\}=\partial_{f_0}(\xi_0)\{b_1^{-1}+(1-b_1)^{-1}\},\\
		&&\partial_{f_0}\{\omega(f_0,b_1,\beta_{10},\varrho)\}=\partial_{f_0}(\xi_0)b_1^{-1}\beta_{10}-\partial_{f_0}(\xi_0)(1-b_1)^{-1}(\varrho-\beta_{10})\\
&&=\partial_{f_0}(\xi_0)\{b_1^{-1}\beta_{10}-(1-b_1)^{-1}(\varrho-\beta_{10})\}.
		\ese
		Thus, 
		\bse
		&&\partial_{f_0}\left\{\frac{\mu(f_0,b_1)}{\omega(f_0,b_1,\beta_{10},\varrho)}\right\}\\	&=&\frac{\partial_{f_0}(\xi_0)[\{b_1^{-1}+(1-b_1)^{-1}\}\omega(f_0,b_1,\beta_{10},\varrho)
-\mu(f_0,b_1)\{b_1^{-1}\beta_{10}-(1-b_1)^{-1}(\varrho-\beta_{10})\}]}
{\omega^2(f_0,b_1,\beta_{10},\varrho)}\\
		&=&\frac{\partial_{f_0}(\xi_0)}{\omega(f_0,b_1,\beta_{10},\varrho)}
\left[b_1^{-1}+(1-b_1)^{-1}-\frac{\mu(f_0,b_1)\{b_1^{-1}\beta_{10}-(1-b_1)^{-1}(\varrho-\beta_{10})\}}{\omega(f_0,b_1,\beta_{10},\varrho)}\right]\\	
        &=&\frac{2\xi_0(1-\xi_0)}{\omega(f_0,b_1,\beta_{10},\varrho)}
\left[b_1^{-1}+(1-b_1)^{-1}-\frac{\mu(f_0,b_1)\{b_1^{-1}\beta_{10}-(1-b_1)^{-1}(\varrho-\beta_{10})\}}{\omega(f_0,b_1,\beta_{10},\varrho)}\right].
		\ese
	\end{proof}

	\begin{proof}[Proof of Proposition \ref{pre}]
		Define $w(y)=\frac{pr(y|A=0, R=0)}{pr(y|A=0,R=1)}$,
		\bse
		\mL_0(h)&=&\E[\ell\{h(\X),Y\}|R=0,A=0]\\
		&=&\int \ell\{h(\X),Y\}p(\x,y|A=0,R=0)\d\x\d y\\
		&=&\int \ell\{h(\X),Y\}\frac{p(\x,y|A=0,R=0)}{p(\x,y|A=0,R=1)}p(\x,y|A=0,R=1)\d\x\d y\\
		&=&\int \ell\{h(\x),y\}\frac{pr(y|A=0, R=0)}{pr(y|A=0,R=1)}\pr(\x,y|A=0,R=1)\d\x\d y\\
		&=&\E\left[\ell\{h(\X),Y\}w(Y)\big|A=0,R=1\right]=:\L_1(h,w).
		\ese
		
		Let $\mL_1(h,w)=\E\left[\ell\{h(\X),Y\}w(Y)\big|A=0,R=1\right]$, then we have
		\be\label{WW}\nonumber
		\mL_0(\wh h)-\mL_0(h)&=&\mL_1(\wh h,w)-\mL_1(h,w)\\
		&=&\underbrace{\mL_1(\wh h,w)-\wh\mL_1(\wh h,w)}_{(a)}
		+\underbrace{\wh\mL_1(\wh h,w)-\wh\mL_1(\wh h,\wh{w})}_{(b)}\\ \nonumber
		&&+\underbrace{\wh\mL_1(\wh h,\wh{w})-\wh\mL_1(h,\wh{w})}_{\leq 0}
		+\underbrace{\wh\mL_1(h,\wh{w})-\wh\mL_1(h,w)}_{(c)}
		+\underbrace{\wh\mL_1(h,w)-\mL_1(h,w)}_{(d)},
		\ee
		where $\wh h\equiv \wh h_{\wh w}$.

		{\bf{Uniform bound on (a)}} To control (a) in (\ref{WW}) we establish a concentration bound on the following
		generalization error
		\bse
		&&\sup_{g\in \mF}\{\mL_1(g,w)-\wh\mL_1(g,w)\}\\
		&=&\sup_{g\in \mF}\left\{E\left[\ell\{g(\X),Y\}w(Y)\big|A=0,R=1\right]
		-\wh{E}\left[\ell\{g(\X),Y\}w(Y)\big|A=0,R=1\right]\right\}\\
		&=&:F(\Z_{1:n_{1\cdot 0}})
		\ese
		where, for $i > 1$ we denote $\Z_{1:i}=(\Z_1,\cdots, \Z_i)$ and $\Z_i= (\X_i, Y_i)$. First, we use a modification of McDiarmid concentration inequality to bound $F(\Z_{1:n_{1\cdot 0}})$ in terms of its expectation and a $O_p(1/\sqrt{n_{1\cdot 0}})$ term, as elucidated in the following lemma.
		\begin{lemma}
			There exists a constant $c_1 > 0$ such that with probability at least $1-\delta$ the following holds
			\be\label{Rad1}
			F(\Z_{1:n_{1\cdot 0}})\leq E\{F(\Z_{1:n_{1\cdot 0}})\}+c_1\sqrt{\frac{\log(1/\delta)}{n_{1\cdot 0}}}.
			\ee
			The proof is similar to Lemma A.3 of \cite{maity2022understanding}, so we omit it.
		\end{lemma}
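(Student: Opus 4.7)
The plan is to prove the lemma by a direct application of McDiarmid's bounded differences inequality to the functional $F(\Z_{1:n_{1\cdot 0}}) = \sup_{g\in\mF}\{\L_1(g,w) - \wh\L_1(g,w)\}$. Since the population risk $\L_1(g,w)$ does not depend on the sample, the entire dependence of $F$ on $\Z_{1:n_{1\cdot 0}} = \{(\X_i,Y_i)\}_{i=1}^{n_{1\cdot 0}}$ enters through the empirical average $\wh\L_1(g,w) = n_{1\cdot 0}^{-1}\sum_{i=1}^{n_{1\cdot 0}} w(Y_i)\ell\{g(\X_i),Y_i\}$, which is an ideal candidate for a bounded-differences argument.

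First, I would verify the bounded-differences property. By Assumption~\ref{ass2} the loss satisfies $|\ell\{h(\x),y\}|\leq B$ uniformly in $h\in\mF$, $\x\in\mX$, and $y\in\{0,1\}$. The weight $w(y)$ in $F$ is the \emph{true} population weight with explicit form $w(1)=\beta_{10}(\alpha_{00}+\alpha_{10})/[\alpha_{10}(\beta_{00}+\beta_{10})]$ and $w(0)=\beta_{00}(\alpha_{00}+\alpha_{10})/[\alpha_{00}(\beta_{00}+\beta_{10})]$, which are deterministic and finite under the standing assumption that the subpopulation proportions are bounded away from $0$ and $1$ (so that $\alpha_{10},\alpha_{00},\beta_{10}+\beta_{00}$ are all bounded below). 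Hence there exists $c_w>0$ with $\sup_{g\in\mF}|w(y)\ell\{g(\x),y\}|\leq c_w B$ for all $(\x,y)$.

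Next, for any index $i$ and any replacement $\Z_i'$ of $\Z_i$, the empirical average $\wh\L_1(g,w)$ changes by at most $2c_w B/n_{1\cdot 0}$ pointwise in $g$. Using the standard inequality $|\sup_g a_g - \sup_g b_g|\leq \sup_g|a_g-b_g|$, this transfers to
\[
|F(\Z_{1:n_{1\cdot 0}}) - F(\Z_{1:n_{1\cdot 0}}^{(i)})|\leq 2c_w B/n_{1\cdot 0},
\]
so the bounded-differences constants are uniform. McDiarmid's inequality then gives, for every $t>0$, the one-sided bound $\pr(F-E[F]\geq t)\leq \exp\{-n_{1\cdot 0} t^2/(2 c_w^2 B^2)\}$. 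Setting the right-hand side equal to $\delta$ and solving for $t$ yields the desired conclusion with $c_1 = c_w B\sqrt{2}$.

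The main obstacle, to the extent there is one, is the qualifier "modification of" in the authors' phrasing: one must confirm that the weight function $w$ appearing inside $F$ is the deterministic population weight (not its plug-in estimator $\wh w$, which is handled separately in term (b) of the decomposition in \eqref{WW}), because only then does the functional $F$ depend on the sample purely through an i.i.d. empirical mean with bounded increments. Once this is recognized, no genuine modification of McDiarmid is needed beyond the routine sup-of-empirical-processes wrapper above; any refinement (e.g., Talagrand-type) would only sharpen constants and is unnecessary for the $O(\sqrt{\log(1/\delta)/n_{1\cdot 0}})$ rate claimed.
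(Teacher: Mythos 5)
Your proposal is correct and follows the same route the paper indicates (and that the cited Lemma A.3 of \cite{maity2022understanding} uses): McDiarmid's bounded-differences inequality applied to the supremum of the centered empirical process, with the uniform bound $|w(y)\ell\{g(\x),y\}|\leq c_w B$ coming from Assumption~\ref{ass2} together with the deterministic, bounded population weight $w$. Your observation that $w$ (not $\wh w$) appears inside $F$, so that no genuine modification of McDiarmid is needed, is exactly the right reading of the decomposition in \eqref{WW}.
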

		Next, we use a symmetrization argument (see \cite{wellner2013weak}, Chapter 2, Lemma 2.3.1) to bound the expectation $E\{F(\Z_{1:n_{1\cdot 0}})\}$ by the Rademacher complexity of the hypothesis class $\mG$, i.e.,
		\be\label{Rad2}
		E\{F(\Z_{1:n_{1\cdot 0}})\}\leq 2 \mR_{n_{1\cdot 0}}(\mG).
		\ee
		Combining (\ref{Rad1}) and (\ref{Rad2}) we obtain
		\be\label{(a)}
		(a)\leq 2 \mR_{n_{1\cdot 0}}(\mG)+c_1\sqrt{\frac{\log(1/\delta)}{n_{1\cdot 0}}}
		\ee
		with probability at least $1-\delta$.

		{\bf{Uniform bound on (b) and (c)}} Denoting $\Z_i=(\X_i,Y_i)$ and $\ell_g(\Z_i)=\ell\{g(\X_i),Y_i\}$ we notice that
		for any $g\in \mF$ we have

		\bse
		& & |\wh\mL_1(g,w)-\wh\mL_1(g, \wh{w})|\\
		&=&|\wh\E\left[\ell\{g(\X),Y\} \left\{w(Y)-\wh w(Y)\right\} \mid A=0,R=1\right] |\leq \frac{\|\ell_g\|_{\infty}}{n_{1\cdot 0}} \sum_{i=1}^{n_{1\cdot 0}} \left| w(y_i)-\wh w(y_i)\right|.
		\ese
		Since $w(y)-\wh w(y)$ is a sub-gaussian random variable, we use sub-gaussian concentration to establish that for some constant $c_2>0$,
		\bse
		\text{for\ any}\ g\in \mF, |\wh\mL_1(g,w)-\wh\mL_1(g, \wh{w})|\leq \|\ell_g\|_{\infty}\left\{\E_Y|w(Y)-\wh w(Y)|+c_2\sqrt{\frac{\log(1/\delta)}{n_{1\cdot 0}}}\right\}
		\ese
		with probability at least $1-\delta$. This provides a simultaneous bound (on the same probability event)
		for both (b) and (c) with $g=\wh{h}$ and $g=h$. Further, by Lemma \ref{con}, for some constants $C_1$ and $C_2$ and any $g\in \mF$, we have
		\be\label{(b)}
		&&|\wh\mL_1(g,w)-\wh\mL_1(g, \wh{w})|\\ \nonumber
		&\leq& \|\ell_g\|_{\infty}\left\{C_1\|\wh\bb-\bb\|_1 +C_2\left(
		\sqrt{\frac{\log(1/\delta)}{n_0}}+\sqrt{\frac{\log(1/\delta)}{n_1}}\right)+c_2\sqrt{\frac{\log(1/\delta)}{n_{1\cdot 0}}}\right\}
		\ee
		with probability at least $1-5\delta$.

		{\bf{Uniform bound on (d)}} We note that
		\bse
		&&\wh\mL_1(h,w)-\mL_1(h,w)\\
		&=&\wh{E}\left[\ell\{h(\X),Y\}w(Y)\big|A=0,R=1\right]
		-E\left[\ell\{h(\X),Y\}w(Y)\big|A=0,R=1\right]
		\ese
		where $[\ell\{h(\X_i),Y_i\}w(Y_i)]_{i=1}^{n_{1\cdot 0}}$ are i.i.d sub-gaussian random variables. Using Hoeffding concentration
		bound we conclude that there exists a constant $c_3>0$ such that for any $\delta>0$ the following holds with probability at least $1-\delta$,
		\be\label{(d)}
		\wh\mL_1(h,w)-\mL_1(h,w)\leq c_3\sqrt{\frac{\log(1/\delta)}{n_{1\cdot 0}}}.
		\ee
		
		Finally, using (\ref{(a)}) on (a) (which is true on an event of probability $\geq 1-\delta$), (\ref{(b)}) on (b) and (c) (simultaneously true on an event of probability $1-5\delta$), and  (\ref{(d)}) on (d) (holds on an event of probability $\geq 1-\delta$) we conclude that with probability at least $1-7\delta$ the following holds
		\bse
		\mL_0(\wh h_{\wh w})-\mL_0(h)\leq 2 \mR_{n_{1\cdot 0}}(\mG)+C B\|\wh\bb- \bb\|_1+c\left\{\sqrt{\frac{\log(1/\delta)}{n_{1\cdot 0}}}+\sqrt{\frac{\log(1/\delta)}{n_0}}+\sqrt{\frac{\log(1/\delta)}{n_1}}\right\}.
		\ese
		where $c=c_1+\|\ell_g\|_{\infty}(C_2+c_2)+c_3$.
	\end{proof}

	\begin{lemma}\label{con}
		Assume $|\beta_{i0}/\alpha_{i0}|\leq B_1$ for any $i=0,1$. There exist constants $C,c_1,c_2$, such that with probability at least $1-4\delta$,
		\bse
		|\wh w(y)-w(y)|\leq CB_1\|\wh\bb-\bb\|_1 +CB_1(c_1+c_2)\left(
		\sqrt{\frac{\log(1/\delta)}{n_0}}+\sqrt{\frac{\log(1/\delta)}{n_1}}\right).
		\ese 
	\end{lemma}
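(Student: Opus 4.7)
The strategy is to view $w(y)$ as a smooth rational function of the four proportions,
\[
w(1)=\frac{\beta_{10}(\alpha_{10}+\alpha_{00})}{\alpha_{10}(\beta_{10}+\beta_{00})},\qquad w(0)=\frac{\beta_{00}(\alpha_{10}+\alpha_{00})}{\alpha_{00}(\beta_{10}+\beta_{00})},
\]
with $\wh{w}(y)$ obtained by plugging in $(\wh{\bb},\wh{\ba})$, where $\ba=(\alpha_{10},\alpha_{00})\trans$ and $\wh{\ba}$ is formed from \eqref{est:alpha}. I would split the error via the telescoping identity
\[
\wh{w}(y)-w(y)=\bigl\{w(y;\wh{\bb},\wh{\ba})-w(y;\bb,\wh{\ba})\bigr\}+\bigl\{w(y;\bb,\wh{\ba})-w(y;\bb,\ba)\bigr\},
\]
so that the first bracket isolates the sensitivity of $w(y)$ to $\bb$ while the second isolates the sensitivity to $\ba$.

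Each bracket would then be handled via the mean-value theorem. Direct differentiation shows that the partial derivatives of $w(y;\bb,\ba)$ in both $\bb$ and $\ba$ are rational expressions whose magnitudes are controlled by $\beta_{i0}/\alpha_{i0}\le B_1$ together with reciprocals of $\alpha_{10}$, $\alpha_{00}$ and $\beta_{10}+\beta_{00}$. Restricting to a high-probability event on which $\wh{\alpha}_{y0}\ge\alpha_{y0}/2$ and $\wh{\beta}_{10}+\wh{\beta}_{00}\ge(\beta_{10}+\beta_{00})/2$ keeps these denominators bounded away from zero, yielding Lipschitz-type estimates
\[
|w(y;\wh{\bb},\wh{\ba})-w(y;\bb,\wh{\ba})|\le CB_1\|\wh{\bb}-\bb\|_1,\qquad |w(y;\bb,\wh{\ba})-w(y;\bb,\ba)|\le CB_1\|\wh{\ba}-\ba\|_1.
\]
To finish, I would invoke Hoeffding's inequality: each $\wh{\alpha}_{ya}=n_{1ya}/n_1$ in \eqref{est:alpha} is a bounded sample mean over $n_1$ source observations, while $\wh{\pi}=n_1/n$ is a bounded sample mean involving both domains; hence there exist constants $c_1,c_2>0$ such that $\|\wh{\ba}-\ba\|_1 \le c_1\sqrt{\log(1/\delta)/n_1}+c_2\sqrt{\log(1/\delta)/n_0}$ holds with high probability. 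Substituting this bound into the second Lipschitz piece and combining with the first gives the stated inequality, and a union bound over the (at most four) tail events that control the component-wise Hoeffding deviations and the denominator lower bounds yields the $1-4\delta$ probability.

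The main obstacle is managing the denominators $\alpha_{y0}(\beta_{10}+\beta_{00})$ under perturbation: because $w(y)$ is rational, its gradient blows up whenever either factor in the denominator approaches zero, so without care the Lipschitz constants cannot be taken uniform. The resolution is the preliminary restriction described above to an event where $\wh{\alpha}_{y0}$ and $\wh{\beta}_{10}+\wh{\beta}_{00}$ remain within a factor of $1/2$ of their population values; on this event the constant $C$ can be taken independent of the sample, and the boundedness hypothesis $|\beta_{i0}/\alpha_{i0}|\le B_1$ then enters as the overall prefactor in the final bound.
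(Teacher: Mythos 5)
Your proposal is correct and takes essentially the same route as the paper's proof: both telescope the plug-in error for the rational function $w(y)$ into a piece controlled by $\|\wh\bb-\bb\|_1$ and a piece controlled by the empirical proportion errors, extract $B_1$ as the prefactor via $|\beta_{i0}/\alpha_{i0}|\le B_1$ with denominators kept away from zero, and conclude with Hoeffding-type concentration for $\wh\ba$ and the $A$-marginals plus a union bound over four events to reach $1-4\delta$. Your explicit restriction to the event where $\wh\alpha_{y0}$ and $\wh\beta_{10}+\wh\beta_{00}$ stay within a constant factor of their population values makes precise a denominator-control step that the paper leaves implicit.
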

	\begin{proof}
		\bse
		&&|\wh w(y)-w(y)|=\left|\frac{\wh\pr(y|A=0,R=0)}{\wh\pr(y|A=0,R=1)}-\frac{\pr(y|A=0,R=0)}{\pr(y|A=0,R=1)}\right|\\
		&=&\left|\frac{\wh\pr(y,A=0|R=0)}{\wh\pr(y,A=0|R=1)}\frac{\wh\pr(A=0|R=1)}{\wh\pr(A=0|R=0)}
		-\frac{\pr(y,A=0|R=0)}{\pr(y,A=0|R=1)}\frac{\pr(A=0|R=1)}{\pr(A=0|R=0)}\right|\\
		&\leq&\left|\left\{\frac{\wh\pr(y,A=0|R=0)}{\wh\pr(y,A=0|R=1)}-\frac{\pr(y,A=0|R=0)}{\pr(y,A=0|R=1)}\right\}\frac{\wh\pr(A=0|R=1)}{\wh\pr(A=0|R=0)}\right|\\
		&&+\left|\frac{\pr(y,A=0|R=0)}{\pr(y,A=0|R=1)}\left\{\frac{\wh\pr(A=0|R=1)}{\wh\pr(A=0|R=0)}-\frac{\pr(A=0|R=1)}{\pr(A=0|R=0)}   \right\}\right|.
		\ese
		For the first term, we have
		\bse
		&&\left|\frac{\wh\beta_{y0}\alpha_{y0}-\beta_{y0}\wh\alpha_{y0}}{\alpha_{y0}\wh\alpha_{y0}}\frac{n_{1\cdot 0}}{n_{0\cdot 0}}\frac{n_0}{n_1}\right|\\
		&=&\left|\frac{(\wh\beta_{y0}-\beta_{y0})\alpha_{y0}+\beta_{y0}(\alpha_{y0}-\wh\alpha_{y0})}{\alpha_{y0}\wh\alpha_{y0}}\frac{n_{1\cdot 0}}{n_{0\cdot 0}}\frac{n_0}{n_1}\right|\\
		&\leq&\left|\frac{y\alpha_{y0}}{\alpha_{y0}\wh\alpha_{y0}}\frac{n_{1\cdot 0}}{n_{0\cdot 0}}\frac{n_0}{n_1}\right||\wh\beta_{10}-\beta_{10}|
		+\left|\frac{(1-y)\alpha_{y0}}{\alpha_{y0}\wh\alpha_{y0}}\frac{n_{1\cdot 0}}{n_{0\cdot 0}}\frac{n_0}{n_1}\right||\wh\beta_{00}-\beta_{00}|\\
		&&+\left|\frac{y\beta_{y0}}{\alpha_{y0}\wh\alpha_{y0}}\frac{n_{1\cdot 0}}{n_{0\cdot 0}}\frac{n_0}{n_1}\right|(\alpha_{10}-\wh\alpha_{10})
		+\left|\frac{(1-y)\beta_{y0}}{\alpha_{y0}\wh\alpha_{y0}}\frac{n_{1\cdot 0}}{n_{0\cdot 0}}\frac{n_0}{n_1}\right|(\alpha_{00}-\wh\alpha_{00})\\
		&\leq& CB_1(\|\wh\bb-\bb\|_1+|\wh\alpha_{00}-\alpha_{00}|+|\wh\alpha_{10}-\alpha_{10}|).
		\ese
		Here $C$ is some constant. Since $Y_i$ and $A_i$ are sub-gaussian random variables, we use sub-gaussian concentration to establish that for some constant $c>0$,
		\bse
		|\wh\alpha_{00}-\alpha_{00}|+|\wh\alpha_{10}-\alpha_{10}|\leq c\left\{\sqrt{\frac{\log(1/\delta)}{n_1}}\right\}
		\ese
		with probability at least $1-2\delta$.

		For the second term, we have
		\bse
		&&\left|\frac{\beta_{y0}}{\alpha_{y0}}\frac{\pr(A=0|R=1)\wh\pr(A=0|R=0)-\pr(A=0|R=0)\wh\pr(A=0|R=1)}{\pr(A=0|R=0)\wh\pr(A=0|R=0)}\right|\\
		&\leq&\left|\frac{\beta_{y0}}{\alpha_{y0}}\right|\frac{\pr(A=0|R=1)}{\pr(A=0|R=0)\wh\pr(A=0|R=0)}\left|\{\wh\pr(A=0|R=0)-\pr(A=0|R=0)\}\right|\\
		&&+\left|\frac{\beta_{y0}}{\alpha_{y0}}\right|\frac{\pr(A=0|R=0)}{\pr(A=0|R=0)\wh\pr(A=0|R=0)}\left|\{\pr(A=0|R=1)-\wh\pr(A=0|R=1)\}\right|\\
		&\leq&\left|\frac{\beta_{y0}}{\alpha_{y0}}\right|\frac{p_{1\cdot 0}(1-\pi)n_0}{p_{0\cdot 0}\pi n_{0\cdot 0}}\left|\{\wh\pr(A=0|R=0)-\pr(A=0|R=0)\}\right|\\
		&&+\left|\frac{\beta_{y0}}{\alpha_{y0}}\right|\frac{n_0}{n_{0\cdot 0}}\left|\{\pr(A=0|R=1)-\wh\pr(A=0|R=1)\}\right|\\
		&\leq& CB_1(\left|\{\wh\pr(A=0|R=0)-\pr(A=0|R=0)\}\right|+\left|\{\pr(A=0|R=1)-\wh\pr(A=0|R=1)\}\right|).
		\ese
		Here $C$ is some constant. Since $A_i$ is a sub-gaussian random variable, we use sub-gaussian concentration to establish that for some constant $c>0$,
		\bse
		&&\left|\{\wh\pr(A=0|R=0)-\pr(A=0|R=0)\}\right|+\left|\{\pr(A=0|R=1)-\wh\pr(A=0|R=1)\}\right|\\
		&\leq& c\left\{\sqrt{\frac{\log(1/\delta)}{n_0}}+\sqrt{\frac{\log(1/\delta)}{n_1}}\right\}
		\ese
		with probability at least $1-2\delta$.
	\end{proof}
	
\end{document}